\DeclareMathOperator*{\argmax}{arg\,max}
\algrenewcommand\textproc{\texttt}
\newtheorem{theorem}{Theorem}
\newtheorem{proposition}{Proposition}
\newtheorem{lemma}{Lemma}
\title{Searching for significant patterns in stratified data} 
\author{
Felipe Llinares-L\'{o}pez\thanks{These authors contributed equally to this work. Corresponding e-mail addresses: \newline  \texttt{felipe.llinares@bsse.ethz.ch},  \texttt{laetitia.papaxanthos@bsse.ethz.ch}, \newline \texttt{dean.bodenham@bsse.ethz.ch}}\\
D-BSSE, ETH Z\"{u}rich\\
\And
Laetitia Papaxanthos$^{*}$ \\
D-BSSE, ETH Z\"{u}rich\\
\And
Dean Bodenham$^{*}$ \\
D-BSSE, ETH Z\"{u}rich\\
\AND
Karsten Borgwardt \\
D-BSSE, ETH Z\"{u}rich\\
}
\begin{document}

\maketitle

\begin{abstract}
Significant pattern mining, the problem of finding itemsets that are significantly enriched in one class of objects, 
is statistically challenging, as the large space of candidate patterns leads to an enormous multiple testing problem. Recently, the concept of {\it testability} was proposed as one approach to correct for multiple testing in pattern mining while retaining statistical power. Still, these strategies based on testability do not allow one to condition the test of significance on the observed covariates, which severely limits its utility in biomedical applications. Here we propose a strategy and an efficient algorithm to perform significant pattern mining in the presence of categorical covariates with $K$ states.


\end{abstract}

\section{Introduction}

The goal of \textit{frequent pattern mining}~\cite{Aggarwal14FPM} is to find patterns in a dataset of objects that occur in at least $\theta$ percent of all objects. Its most popular instance is \textit{frequent itemset mining}, in which one is given a collection of sets (transactions) and tries to find subsets of elements (items) that frequently co-occur in the same sets. The classic example is \textit{shopping basket analysis}, in which one tries to find groups of products that are frequently co-bought by the customers of a supermarket~\cite{Agrawal93}.

\textit{Significant} (or \textit{discriminative} or \textit{contrast}) \textit{pattern mining}~\cite{Dong12} does not focus on the absolute frequency of a pattern in a dataset, but rather whether it is statistically significantly enriched in one of two classes of objects. 
Finding groups of mutations that are statistically significantly more common in disease-carriers than healthy controls is one of the many biomedical applications of significant pattern mining~\cite{fang_high-order_2012}.

One of the biggest problems in pattern mining is the fact that the number of candidate patterns grows exponentially with the number of items in a pattern. The resulting computational problem of enumerating frequent patterns has been intensively investigated for decades~\cite{Agrawal93}. The accompanying statistical problem, that each candidate pattern consists of a hypothesis in significant pattern mining and that testing millions and billions of patterns for significance leads to an enormous multiple testing problem, has received far less attention. 

Only recently, Terada et al.~\cite{TeradaPNAS} introduced the concept of testability, originally developed by Tarone~\cite{Tarone}, to pattern mining: the key idea is that if we are working with discrete test statistics, e.g. Fisher's exact test, to quantify whether a pattern is statistically significantly enriched in one class, then many patterns are too rare or too frequent to ever become significantly enriched. Tarone argued that these {\it untestable hypotheses} (or untestable patterns) need not be considered when correcting for multiple testing (e.g. via Bonferroni correction~\cite{Bonferroni36}), thereby greatly improving the statistical power in detecting truly significant patterns.

While this new approach is of great importance for high-dimensional data mining, it is limited in one crucial aspect: It cannot account for the effect of covariates, which are omnipresent in particular biomedical applications. For instance, the enrichment of particular mutations in disease-carriers may heavily depend on a person's age group, gender, or the geographic subpopulation to which he or she belongs. It has been long appreciated by the Genetics community~\cite{price_principal_2006} that ignoring these covariates leads to a surge in false positive patterns. Hence, significant pattern mining methods which do not account for these covariates have limited application in this domain. 

{\it In order to bridge this gap, our goal in this article is to enable significant pattern mining in data that is stratified according to a categorical variable with $K$ states.}

To this end, we propose a strategy based on the Cochran-Mantel-Haenszel test for $K$ 2 $\times$ 2 contingency tables~\cite{mantel1959} and increase its statistical power by pruning non-testable patterns (Section~\ref{sec:background}). As this approach scales exponentially with $K$ we also propose an exact algorithm to perform the test in a runtime which is only of order $O(K \log K)$ (Section~\ref{sec:proposal}). Experiments on simulated and real data demonstrate that our approach is superior in terms of runtime, statistical power and false positive rate when compared to all competing methods (Section~\ref{sec:exp}).
We believe that our approach opens the door to numerous applications of significant pattern mining.


\section{Significant pattern mining and the multiple testing problem}
\label{sec:background}

\subsection{Significant pattern mining}
\label{sec:significant_pattern_mining}

Let $\mathcal{E}=\set{e_{1},\hdots,e_{m}}$ be a universe of $m$ \emph{items}. We call any set $\mathcal{S} \subseteq \mathcal{E}$ containing $|\mathcal{S}|$ distinct items a \emph{pattern} and assume we are given a database $\mathcal{D} = \set{\left(\mathcal{T}_{i},y_{i}\right)}_{i=1}^{n}$ containing $n$ \emph{transactions} $\mathcal{T}_{i} \subseteq \mathcal{E}$ with binary labels $y_{i} \in \set{0,1}$. Informally, the goal of significant pattern mining is to find all patterns $\mathcal{S}$ whose occurrence within a transaction $\mathcal{T}$ is informative of the label $y$ of $\mathcal{T}$. 

Given a pattern $\mathcal{S}$, let $G(\mathcal{S},\mathcal{T})=\mathbbm{1}[\mathcal{S} \subseteq \mathcal{T}]$ be an indicator binary random variable which takes value $1$ whenever $\mathcal{S} \subseteq \mathcal{T}$ and value $0$ otherwise. The pattern $\mathcal{S}$ is \emph{statistically associated} with the class labels $y$ if $G(\mathcal{S},\mathcal{T})$ and the class labels $y$ are statistically dependent random variables. Given a transaction database $\mathcal{D}$ containing $n$ i.i.d. labeled transactions, we can estimate the statistical dependence between $G(\mathcal{S},\mathcal{T})$ and the class labels $y$ by choosing an appropriate \emph{test statistic} and computing the corresponding \emph{$p$-value}. The $p$-value of the observed association between $G(\mathcal{S},\mathcal{T})$ and the class labels $y$ in $\mathcal{D}$ is defined as the probability of obtaining a value for the test statistic at least as extreme as the one observed in $\mathcal{D}$ under the \emph{null hypothesis} of statistical independence between $G(\mathcal{S},\mathcal{T})$ and $y$. A low $p$-value indicates that it is unlikely that the database $\mathcal{D}$ was generated by a model in which the random variables $G(\mathcal{S},\mathcal{T})$ and $y$ are statistically independent. Thus, the pattern $\mathcal{S}$ and the class labels $y$ are deemed to be statistically significantly associated if the $p$-value is below a predefined {\it significance threshold} $\alpha$. From the definition of $p$-value it follows that the probability of finding an inexistent association between pattern $\mathcal{S}$ and the class labels $y$, i.e. having a \emph{false positive}, is upper bounded by $\alpha$. Therefore, the significance threshold controls the trade-off between the probability of false positives and the \emph{statistical power} to discover truly associated patterns.

Since both $G(\mathcal{S},\mathcal{T})$ and $y$ are binary random variables, one can summarise the relevant information from the observed database $\mathcal{D}$ as a $2 \times 2$ contingency table:

\begin{center}
	\begin{tabular}{|c|c|c|c|}
		\hline
		Variables & $G(\mathcal{S},\mathcal{T})=1$ & $G(\mathcal{S},\mathcal{T})=0$ & Row totals\\
		\hline
		$y=1$ & $a_{\mathcal{S}}$ & $n_1-a_{\mathcal{S}}$ & $n_1$ \\
		\hline
		$y=0$ & $x_{\mathcal{S}}-a_{\mathcal{S}}$ & $(n-n_1)-(x_{\mathcal{S}}-a_{\mathcal{S}})$ & $n-n_1$ \\
		\hline
		Col totals & $x_{\mathcal{S}}$ & $n-x_{\mathcal{S}}$ & $n$ \\
		\hline
	\end{tabular}
\end{center}

$n$ is the number of transactions, $n_{1}$ of which have class label $y=1$; $x_{\mathcal{S}}$ is the number of transactions containing pattern $\mathcal{S}$ and $a_{\mathcal{S}}$ is the number of transactions with class label $y=1$ containing pattern $\mathcal{S}$. Popular methods to obtain $p$-values from $2 \times 2$ contingency tables include Fisher's exact test~\cite{FisherExactTest} and Pearson's $\chi^{2}$-test~\cite{Pearson1900}, among others.

In large-scale significant pattern mining, we must compute a $p$-value as described above for every pattern $\mathcal{S}$ in a potentially huge search space of patterns, i.e. for all $\mathcal{S} \in \mathcal{M}$, $\mathcal{M} \subseteq 2^{\mathcal{E}}$. Note that this formulation is very general and includes classical significant pattern mining problems such as significant itemset mining, significant subgraph mining or significant interval search as particular instances. As we will discuss in section~\ref{sec:mult_hyp_test}, the overwhelming size of the search space poses both computational and statistical challenges that can only be overcome by combining special statistical testing methods for discrete data with efficient branch-and-bound algorithms.

\subsection{Significant pattern mining in stratified datasets}

Suppose we have a collection $\set{\mathcal{D}}_{k=1}^{K}$ of $K$ transaction databases defined under the same universe of items $\mathcal{E}$. This situation arises naturally when the data is influenced by an observed categorical covariate of cardinality $K$.  
In this setting, one can perform a meta-analysis and construct a separate $2 \times 2$ contingency table for each of the $K$ values of the categorical covariate to then combine the individual results.

One of the first meta-analysis procedures is Fisher's combined probability test~\cite{Fisher1925}, which computes a $p$-value for each of the $K$ $2 \times 2$ contingency tables independently and then combines them as $p_{\mathrm{fisher}}=-2\sum_{i=1}^{K}{p^{i}}$. It can be shown that under the global null hypothesis that there is no statistical association in any of the $K$ contingency tables, $p_{\mathrm{fisher}} \sim \chi^{2}_{2k}$. Another popular approach is the Cochran-Mantel-Hanszel (CMH) test~\cite{mantel1959}. The CMH test statistic is defined directly as a function of the cell counts of each of the $K$ contingency tables as:

\begin{equation}
\label{eq:cmh_def}
	T_{\mathrm{cmh}} = \frac{\left(\sum_{i=1}^{K}{a_{\mathcal{S}}^{i} - \gamma^{i}x_{\mathcal{S}}^{i}}\right)^{2}}{\sum_{i=1}^{K}{\gamma^{i}(1-\gamma^{i})x_{\mathcal{S}}^{i}\left( 1-\frac{x_{\mathcal{S}}^{i}}{n^{i}}\right) }}
\end{equation}

where $\gamma^{i}=n_{1}^{i} / n^{i}$. The CMH statistic defined in equation~\eqref{eq:cmh_def} can be seen as an extension of Pearson's $\chi^{2}$-test to meta-analysis, as both are identical for $K=1$. Moreover, it can be shown~\cite{mantel1959} that under the global null hypothesis $T_{\mathrm{cmh}}$ also follows a $\chi^{2}_{1}$ distribution for any $K$. Unlike other meta-analysis methods, such as Fisher's combined probability test, the CMH statistic combines the statistical association found across databases taking the sign of the correlation into account as well. 
In this article, we will choose the CMH statistic as the meta-analysis procedure but our work can be readily extended to other methods just as Fisher's combined probability test.


\subsection{Multiple testing problem in pattern mining}
\label{sec:mult_hyp_test}

The main statistical challenge in large-scale significant pattern mining is the so called \emph{multiple comparisons problem}. As discussed in section~\ref{sec:significant_pattern_mining}, the search space of patterns $\mathcal{M} \subseteq 2^{\mathcal{E}}$ is usually extremely large, with $\mathcal{M}$ in the order of billions or even trillions. Thus, if the occurrence within transactions of every pattern $\mathcal{S} \in \mathcal{M}$ was tested for association with the class labels at level $\alpha$, the overall expected number of false positives would be approximately $\alpha |\mathcal{M}|$. For classical choices of the significance threshold, such as $\alpha=0.05$, one would obtain from hundreds of millions to billions of false positives, rendering the procedure not applicable in practice. Therefore, rather than controlling the probability of having false positives for every pattern $\mathcal{S}$ individually, it would be desirable to instead control the FWER (Family Wise Error Rate), defined as the probability of producing one or more false positives when exploring the whole search space $\mathcal{M}$. That can be achieved by computing a  \emph{corrected significance threshold} $\delta$ such that if every $\mathcal{S} \in \mathcal{M}$ is deemed statistically significantly associated when $p_{\mathcal{S}} \le \delta$, one has $\mathrm{FWER} \le \alpha$. 
The most popular way to correct for multiple testing is to use a \emph{Bonferroni correction}~\cite{Bonferroni36}. 
By upper bounding $\mathrm{FWER}(\delta) \le \delta \mathcal{M}$ it follows that $\delta^{*}_{\mathrm{bon}}=\alpha / \mathcal{M}$ controls the FWER. This classic procedure is known to be very conservative and to largely lose statistical power if $|\mathcal{M}|$ is huge as in pattern mining.


Tarone was first to point out in~\cite{Tarone} that the discrete nature of $2 \times 2$ contingency tables can be used to compute a lower bound on the attainable $p$-values and obtaine an improved corrected significance threshold. For a given $2 \times 2$ contingency table with fixed margins $x_{\mathcal{S}}$, $n_{1}$ and $n$ one can show that $a_{\mathcal{S}}\in [\![a_{\mathcal{S},min},a_{\mathcal{S},max}]\!]$ with $a_{\mathcal{S},min}=\max(0,x_{\mathcal{S}}-(n-n_1))$ and $a_{\mathcal{S},max}=\min(x_{\mathcal{S}},n_1)$. Thus, there are at most $a_{\mathcal{S},max}-a_{\mathcal{S},min}+1$ different values that the $p$-value could take. The \emph{minimum attainable $p$-value} is then $\Psi(x_\mathcal{S})= \min\set{p_{\mathcal{S}}(k) | a_{\mathcal{S},\mathrm{min}} \leq \gamma \leq a_{\mathcal{S},\mathrm{max}}}$. By definition, only the patterns belonging to the \emph{set of testable patterns at corrected significance level $\delta$}, $\mathcal{I}_{T}(\delta) = \{\, \mathcal{S} \mid \Psi(x_{\mathcal{S}}) \le \delta\,\}$, can be statistically significantly associated at level $\delta$. If the test statistic of choice considers the table margins fixed\footnote{This holds for most popular test statistics for $2 \times 2$ contingency tables, such as Fisher's Exact Test or Pearson's $\chi^{2}$-test.} then patterns $\mathcal{S} \in \mathcal{M} \setminus \mathcal{I}_{T}(\delta)$ do not affect the FWER at level $\delta$. Thus, one can obtain an improved corrected significance threshold as:
\begin{align}
\label{eq:tarone_threshold}
	\delta^{*}_{\mathrm{tar}} = \max\left\{\delta \, | \, \delta\left\vert \mathcal{I}_{T}(\delta) \right\vert \le \alpha\right\}
\end{align}

How to compute this number of testable patterns for improved multiple testing correction has been the focus of several recent studies~\cite{TeradaPNAS,Sugiyama15SDM,TeradaIEEE}. These approaches exploit a link the between the frequency of a pattern and its testability. This connection does not hold in meta-analysis (see Section~\ref{sec:algorithm}, Paragraph `Search Space Pruning'), which therefore requires the development of new efficient techniques to detect all testable patterns in meta-analyses.  
In the next section, we propose such a novel efficient algorithm, \texttt{FastCMH}, 
which can find patterns $\mathcal{S}$ associated with the class labels $y$ conditioned on a categorical covariate by using the CMH test, while only considering testable patterns in its correction for multiple testing.

\section{Efficient exact computation of $T_{\mathrm{cmh}}$: FastCMH}
\label{sec:proposal}

In this section we our main contribution, the \texttt{FastCMH} algorithm, in detail. In a nutshell, FastCMH is an efficient algorithm to solve equation~\eqref{eq:tarone_threshold} when the CMH statistic is used to perform meta-analysis across $K$ transaction databases. Since equation~\eqref{eq:tarone_threshold} essentially poses a one-dimensional optimisation problem, the solution $\delta^{*}_{\mathrm{tar}}$ can be found via line search. However, the main difficulty stems from the fact that evaluating $\delta\left\vert \mathcal{I}_{T}(\delta) \right\vert$ exactly is extremely costly and, therefore, computational tricks are required to tackle the optimisation problem. We first describe the pseudocode of FastCMH in subsection \ref{sec:algorithm} and then its theoretical properties. 

\subsection{The Algorithm: FastCMH}
\label{sec:algorithm}

The pseudocode of \texttt{FastCMH} is shown in Algorithm \ref{alg:fastcmh}:

\begin{algorithm}[ht]
\begin{small}
\caption{FastCMH}\label{alg:fastcmh}
\begin{multicols}{2}
 \begin{algorithmic}[1]
  \State {\bfseries Input:} $\set{\mathcal{D}}_{k=1}^{K}$ and $\alpha$
  \State {\bfseries Output:} Corrected significance threshold $\delta^*$
	\Function{FastCMH}{$\alpha$, $\set{\mathcal{D}}_{k=1}^{K}$}	
		\State $j \leftarrow 0$, $\delta \leftarrow 10^{-j\mu}$ \label{algline:init_begin}
		\State $\mathtt{buckets}[j] \leftarrow 0 \; \forall j=0,\hdots,N_{\mathrm{steps}}-1$
		\State $\left\vert\mathcal{I}_{T}(\delta)\right\vert \leftarrow 0$ \label{algline:init_end}
 		\State \texttt{process\_next}$(\mathrm{root}, \set{n_{1}^{i}}_{i=1}^{K})$ \label{algline:start_enum}
        		\State Return $\delta^{*}_{\mathrm{tar}}= \alpha / \left\vert\mathcal{I}_{T}(\delta)\right\vert$
	\EndFunction
	\Function{process\_next}{$\mathcal{S}$, $\set{x_{\mathcal{S}}^{i}}_{i=1}^{K}$}
			\If{$\Psi(\set{x_{\mathcal{S}}^{i}}_{i=1}^{K}) \le \delta$} \label{algline:check_testability}
				\State $\left\vert\mathcal{I}_{T}(\delta)\right\vert \leftarrow \left\vert\mathcal{I}_{T}(\delta)\right\vert+1 $ \label{algline:testable_block_begin}
				\State \texttt{update\_bucket}($\Psi(\set{x_{\mathcal{S}}^{i}}_{i=1}^{K})$) 			 
				 \While{$\delta\left\vert\mathcal{I}_{T}(\delta)\right\vert > \alpha$}
				 	\State $\left\vert\mathcal{I}_{T}(\delta)\right\vert \leftarrow \left\vert\mathcal{I}_{T}(\delta)\right\vert - \mathtt{buckets}[j]$ \label{algline:update_testable_pattern_count}
					\State $j \leftarrow j+1$, $\delta \leftarrow 10^{-j\mu}$ 
				 \EndWhile  \label{algline:testable_block_end}
			\EndIf
			\If{\texttt{is\_not\_prunable}($\set{x_{\mathcal{S}}^{i}}_{i=1}^{K}$,$\delta$)}  \label{algline:evaluate_pruning}
				\For{$\mathcal{S}^{\prime} \in \mathrm{Children}(\mathcal{S})$} \label{algline:cont_enum_begin}
					\State Compute $x_{\mathcal{S}^{\prime}}^{i}  \; \forall i=1,\hdots,K$
					\State \texttt{process\_next}($\mathcal{S}^{\prime}$,$x_{\mathcal{S}^{\prime}}$)
				\EndFor \label{algline:cont_enum_end}
			\EndIf
	\EndFunction
	\Function{update\_bucket}{$p$}
		\State $i \leftarrow \lfloor -\log_{10}(p) / \mu\rfloor$
		\State $\mathtt{buckets}[i] \leftarrow \mathtt{buckets}[i] + 1$
	\EndFunction
	\Function{is\_not\_prunable}{$\set{x_{\mathcal{S}}^{i}}_{i=1}^{K}$,$\delta$}
    	\If{$\exists i \; | \; x_{\mathcal{S}}^{i} > \min(n_{1}^{i},n^{i}-n_{1}^{i})$}
        	\State Return True
        \EndIf
		\State $T_{\mathrm{prune}} \leftarrow \underset{\set{0 \le z^{i} \le x_{\mathcal{S}}^{i}}_{i=1}^{K}}{\max}T_{\mathrm{cmh}}^{\mathrm{max}}(\set{z^{i}}_{i=1}^{K})$ 
		\State $\Psi_{\mathrm{prune}} \leftarrow 1 - F_{\chi^{2}_{1}}(T_{\mathrm{prune}})$
		\State Return $\Psi_{\mathrm{prune}} \le \delta$ 
	\EndFunction
 \end{algorithmic}
 \end{multicols}
 \end{small}
 \end{algorithm}
 
\textbf{Initialization:} Between lines~\ref{algline:init_begin} and~\ref{algline:init_end} the main variables of CMH are initialized. We try to find $\delta^{*}_{\mathrm{tar}}$ by sampling $N_{\mathrm{steps}}$ tentative values for $\delta^{*}_{\mathrm{tar}}$ in a logarithmic grid $10^{-j\mu}$ with step size $\mu$ in the exponent. Due to the discrete nature of the problem, as long as $\mu$ is small enough and $N_{\mathrm{steps}}$ large enough the method is insensitive to those parameters. We used $\mu=0.06$ and $N_{\mathrm{steps}}=500$ throughout the article. We initialise the tentative solution $\delta$ to $1$, the largest value in the grid. Also, we create an array $\mathtt{buckets}$, initialised to $0$, such that $\mathtt{buckets}[i]$ equals the number of patterns $\mathcal{S}$ found so far with minimum attainable $p$-value $10^{-(i+1)\mu} \le \Psi(\set{x_{\mathcal{S}}^{i}}_{i=1}^{K}) \le 10^{-i\mu}$. We also initialise the count of patterns $\mathcal{S}$ explored so far with $\Psi(\set{x_{\mathcal{S}}^{i}}_{i=1}^{K}) \le \delta$, $\left\vert\mathcal{I}_{T}(\delta)\right\vert$, to $0$. After the initialisation, in line~\ref{algline:start_enum} the enumeration process beings.

\subsubsection*{FastCMH core: The \texttt{process\_next} function}

{\bf Testability:} We assume that the search space $\mathcal{M}$ can be arranged in the shape of a tree, with patterns $\mathcal{S} \in \mathcal{M}$ corresponding to nodes in such a way that children $\mathcal{S}^{\prime}$ of a pattern . This assumption covers a vast amount of classical data mining problems such as itemset and subgraph mining. The function \texttt{process\_next} is the core of CMH and is devoted to analysing a given pattern $\mathcal{S}$ and iteratively exploring the search space $\mathcal{M}$ in a depth-first manner. 

First of all, in line~\ref{algline:check_testability} the algorithm checks if pattern $\mathcal{S}$ is testable at the current level $\delta$, i.e. if $\Psi(\set{x_{\mathcal{S}}^{i}}_{i=1}^{K}) \le \delta$. That requires extending Tarone's method to the CMH test, which amounts to finding
\begin{align}
	T_{\mathrm{cmh}}^{\mathrm{max}}(\set{x_{\mathcal{S}}^{i}}_{i=1}^{K}) &= \underset{\set{a_{\mathcal{S},min}^{i} \le a_{\mathcal{S}}^{i} \le a_{\mathcal{S},max}^{i}}_{i=1}^{K}}{\max}T_{\mathrm{cmh}}^{\mathrm{max}}(\set{a_{\mathcal{S}}^{i}}_{i=1}^{K}) \\
	\Psi(\set{x_{\mathcal{S}}^{i}}_{i=1}^{K}) &= 1-F_{\chi^{2}_{1}}(T_{\mathrm{cmh}}^{\mathrm{max}}(\set{x_{\mathcal{S}}^{i}}_{i=1}^{K}))
\end{align}
where we have omitted the dependence on $\set{n_{1}^{i}}_{i=1}^{K}$ and $\set{n^{i}}_{i=1}^{K}$ to simplify the notation and $F_{\chi^{2}_{1}}(x)$ denotes the distribution function of a $\chi^{2}_{1}$ random variable. In the Appendix we prove the following proposition:
\begin{proposition}
\label{prop:tcmh_max}
	\begin{equation}
	T_{\mathrm{cmh}}^{\mathrm{max}}(\set{x_{\mathcal{S}}^{i}}_{i=1}^{K}) = \frac{\max\left[ \left( \sum_{i=1}^{K}{a_{\mathcal{S},min}^{i} - \gamma^{i}x_{\mathcal{S}}^{i}} \right)^{2}   , \left( \sum_{i=1}^{K}{a_{\mathcal{S},max}^{i} - \gamma^{i}x_{\mathcal{S}}^{i}}\right)^{2} \right]}{\sum_{i=1}^{K}{\gamma^{i}(1-\gamma^{i})x_{\mathcal{S}}^{i}\left( 1-\frac{x_{\mathcal{S}}^{i}}{n^{i}}\right) }}
	\end{equation}
\end{proposition}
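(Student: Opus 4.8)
The plan is to exploit the crucial structural fact that, for the CMH statistic, the denominator depends only on the fixed table margins $\set{x_{\mathcal{S}}^{i}}$, $\set{n_{1}^{i}}$ and $\set{n^{i}}$, and is therefore \emph{constant} as the cell counts $\set{a_{\mathcal{S}}^{i}}$ range over their feasible intervals. Consequently, maximising $T_{\mathrm{cmh}}$ over $\set{a_{\mathcal{S}}^{i}}$ is equivalent to maximising its numerator alone, the denominator being strictly positive for any non-degenerate collection of tables. This reduces the $K$-dimensional constrained optimisation to maximising the single squared linear form $\bigl(\sum_{i=1}^{K}(a_{\mathcal{S}}^{i}-\gamma^{i}x_{\mathcal{S}}^{i})\bigr)^{2}$.

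Next I would introduce the scalar $S=\sum_{i=1}^{K}(a_{\mathcal{S}}^{i}-\gamma^{i}x_{\mathcal{S}}^{i})$ and observe that, since the constant $\sum_{i}\gamma^{i}x_{\mathcal{S}}^{i}$ is fixed, $S$ is a strictly increasing function of each $a_{\mathcal{S}}^{i}$ separately. Because the feasible region is the product box $\prod_{i=1}^{K}[\![a_{\mathcal{S},min}^{i},a_{\mathcal{S},max}^{i}]\!]$, monotonicity in each coordinate together with connectedness implies that, as $\set{a_{\mathcal{S}}^{i}}$ ranges over the box, $S$ sweeps out exactly the interval $[S_{\min},S_{\max}]$, where $S_{\min}=\sum_{i}(a_{\mathcal{S},min}^{i}-\gamma^{i}x_{\mathcal{S}}^{i})$ is attained at the corner with every $a_{\mathcal{S}}^{i}=a_{\mathcal{S},min}^{i}$ and $S_{\max}=\sum_{i}(a_{\mathcal{S},max}^{i}-\gamma^{i}x_{\mathcal{S}}^{i})$ at the corner with every $a_{\mathcal{S}}^{i}=a_{\mathcal{S},max}^{i}$.

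The final step is to maximise $S^{2}$ over $S\in[S_{\min},S_{\max}]$. Since $S\mapsto S^{2}$ is convex, its maximum over a bounded interval is attained at one of the two endpoints, giving $\max S^{2}=\max(S_{\min}^{2},S_{\max}^{2})$; in particular all ``mixed'' corners of the box yield intermediate values of $S$ and hence cannot exceed this bound. Substituting the expressions for $S_{\min}$ and $S_{\max}$ into the numerator and dividing by the fixed denominator yields precisely the closed form claimed in the proposition.

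The point that most deserves care — which I expect to be a subtlety rather than a genuine obstacle — is the integrality of the cell counts: the $a_{\mathcal{S}}^{i}$ must be integers, so strictly speaking we optimise over integer points of the box, not its continuous relaxation. This causes no loss, because the two optimising configurations identified above (all coordinates at $a_{\mathcal{S},min}^{i}$, or all at $a_{\mathcal{S},max}^{i}$) are themselves integer-valued by the definitions $a_{\mathcal{S},min}^{i}=\max(0,x_{\mathcal{S}}^{i}-(n^{i}-n_{1}^{i}))$ and $a_{\mathcal{S},max}^{i}=\min(x_{\mathcal{S}}^{i},n_{1}^{i})$. The discrete maximum therefore coincides with the continuous one, and the convexity argument applies verbatim.
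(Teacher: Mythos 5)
Your proof is correct and follows essentially the same route as the paper's: both exploit the fact that the denominator is fixed by the margins, reduce the problem to the one-dimensional aggregate sum $a_{\mathcal{S}}=\sum_{i=1}^{K}a_{\mathcal{S}}^{i}$ ranging over an interval, and invoke convexity of the squared term to place the maximum at one of the two extreme corners. Your explicit treatment of integrality is a nice extra touch of rigor that the paper glosses over, but it does not change the argument.
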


The result in proposition~\ref{prop:tcmh_max} allows us to compute $\Psi(\set{x_{\mathcal{S}}^{i}}_{i=1}^{K})$ exactly in $O(K)$ time. If the pattern is testable at the current level $\delta$, the block of code between lines~\ref{algline:testable_block_begin} and \ref{algline:testable_block_end} is executed. Firstly, the number of testable patterns processed so far, $\left\vert\mathcal{I}_{T}(\delta)\right\vert$, and the corresponding entry of the array $\mathtt{buckets}$ are increased to account for the newly processed testable pattern. Then, the FWER condition $\delta\left\vert \mathcal{I}_{T}(\delta) \right\vert \le \alpha$ is checked. If not satisfied, then we must decrease the tentative corrected significance threshold $\delta$ until the condition is satisfied again. Notice that every time $\delta$ is decreased to the next point in the grid, the patterns $\mathcal{S}$ for which $10^{-(j+1)\mu} \le \Psi(\set{x_{\mathcal{S}}^{i}}_{i=1}^{K}) \le 10^{-j\mu}$ become non-testable. Therefore, $\left\vert\mathcal{I}_{T}(\delta)\right\vert$ is adjusted by subtracting $\mathtt{buckets}[j]$ in line~\ref{algline:update_testable_pattern_count}.

{\bf Search space pruning:} Next, one must determine whether children $\mathcal{S}^{\prime}$ of the current pattern $\mathcal{S}$ must be visited or whether we can prune the search space. This step is crucial since, otherwise, we would need to explore the entire search space $\mathcal{M}$, resulting in an inadmissible computational complexity. Since, by assumption, $x_{\mathcal{S}^{\prime}} \le x_{\mathcal{S}}$ for all children $\mathcal{S}^{\prime}$ of pattern $\mathcal{S}$, when the function $\Psi(x_{\mathcal{S}})$ is monotonically decreasing in $x_{\mathcal{S}} \in [0,\min(n_{1},n-n_{1})]$, one can conclude that children of non-testable patterns with $x_{\mathcal{S}} \le ,\min(n_{1},n-n_{1})$ must be necessarily non-testable as well. This is the key property that has been exploited by all existing previous work in significant pattern mining based on Tarone's method. Nonetheless, the function $\Psi(\set{x_{\mathcal{S}}^{i}}_{i=1}^{K})$ does not exhibit this monotonicity property:
\begin{proposition}
\label{prop:non_monotonicity}
	The minimum attainable $p$-value function for the CMH statistic, $\Psi(\set{x_{\mathcal{S}}^{i}}_{i=1}^{K})$, is not monotonically decreasing with respect to each $x_{\mathcal{S}}^{i}$ in $x_{\mathcal{S}}^{i} \in [0,\min(n_{1}^{i},n^{i}-n_{1}^{i})]$ if there exist $i$,$j$ in $1,\hdots,K$ such that $\frac{n_{1}^{i}}{n^{i}} < \frac{1}{2},\frac{n_{1}^{j}}{n^{j}} > \frac{1}{2}$
\end{proposition}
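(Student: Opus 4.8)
The plan is to work with the equivalent statement for $T_{\mathrm{cmh}}^{\mathrm{max}}$. Since $\Psi = 1 - F_{\chi^2_1}(T_{\mathrm{cmh}}^{\mathrm{max}})$ and $F_{\chi^2_1}$ is strictly increasing, $\Psi$ is monotonically decreasing in a coordinate $x_{\mathcal{S}}^i$ if and only if $T_{\mathrm{cmh}}^{\mathrm{max}}$ is monotonically increasing in that coordinate. Hence it suffices to exhibit a point in the admissible box and a coordinate along which $T_{\mathrm{cmh}}^{\mathrm{max}}$ strictly decreases. First I would simplify the expression of Proposition~\ref{prop:tcmh_max} on the region of interest: for $x_{\mathcal{S}}^i \in [0, \min(n_1^i, n^i - n_1^i)]$ one has $a_{\mathcal{S},min}^i = 0$ and $a_{\mathcal{S},max}^i = x_{\mathcal{S}}^i$, so that
\begin{equation}
T_{\mathrm{cmh}}^{\mathrm{max}} = \frac{\max\left[ \left(\sum_{i} \gamma^i x_{\mathcal{S}}^i\right)^2, \left(\sum_{i}(1-\gamma^i)x_{\mathcal{S}}^i\right)^2\right]}{\sum_i \gamma^i(1-\gamma^i)x_{\mathcal{S}}^i\left(1-\frac{x_{\mathcal{S}}^i}{n^i}\right)}.
\end{equation}
Writing $A = \sum_i \gamma^i x_{\mathcal{S}}^i$ and $B = \sum_i (1-\gamma^i)x_{\mathcal{S}}^i$ (both nonnegative), the numerator is $\left(\max(A,B)\right)^2$ and the denominator I will call $D$.

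For the construction, I would fix indices $i,j$ with $\gamma^i < 1/2 < \gamma^j$ as in the hypothesis, set $x_{\mathcal{S}}^\ell = 0$ for all $\ell \ne i,j$, and set $x_{\mathcal{S}}^j$ to its maximal admissible value $n^j - n_1^j$ (the relevant bound since $\gamma^j > 1/2$). Evaluating at $x_{\mathcal{S}}^i = 0$ gives $A = \gamma^j x_{\mathcal{S}}^j > (1-\gamma^j)x_{\mathcal{S}}^j = B$, because $\gamma^j > 1/2$, so the $A$-term strictly dominates the maximum and $T_{\mathrm{cmh}}^{\mathrm{max}} = A^2/D$ on a neighborhood. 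The idea is that increasing $x_{\mathcal{S}}^i$ now inflates the dominant numerator term $A$ only at rate $\gamma^i$ (small, since $\gamma^i < 1/2$) while inflating $D$ at rate $\gamma^i(1-\gamma^i)$; when $A$ is large the denominator growth wins and the ratio drops. The two hypotheses enter precisely here: $\gamma^j > 1/2$ is what forces $A$ rather than $B$ to dominate, and $\gamma^i < 1/2$ is what makes that dominant term insensitive to $x_{\mathcal{S}}^i$.

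Finally I would make this quantitative by differentiating. In the regime $A > B$,
\begin{equation}
\frac{\partial T_{\mathrm{cmh}}^{\mathrm{max}}}{\partial x_{\mathcal{S}}^i} = \frac{A\gamma^i}{D^2}\left[2D - A(1-\gamma^i)\left(1-\frac{2x_{\mathcal{S}}^i}{n^i}\right)\right].
\end{equation}
Evaluating at the constructed point ($x_{\mathcal{S}}^i = 0$ and $x_{\mathcal{S}}^j = n^j - n_1^j$, so that $1 - x_{\mathcal{S}}^j/n^j = \gamma^j$) and factoring out the positive quantity $\gamma^j x_{\mathcal{S}}^j$, the bracket reduces to $2\gamma^j(1-\gamma^j) - (1-\gamma^i)$. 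The proof closes on the elementary inequality $2\gamma^j(1-\gamma^j) \le 1/2 < 1 - \gamma^i$, where the left bound is strict for $\gamma^j \ne 1/2$ and the right bound is strict for $\gamma^i < 1/2$; hence the bracket is negative, the partial derivative is strictly negative, and $T_{\mathrm{cmh}}^{\mathrm{max}}$ (equivalently $\Psi$) is not monotone. The main obstacle is the non-smoothness of the $\max$ in the numerator: the derivative argument is valid only where one branch strictly dominates, so I must check that the chosen point lies strictly in the $A > B$ branch and remains there on a neighborhood of $x_{\mathcal{S}}^i$, and also dispose of the degenerate margins ($\gamma^j = 1$, i.e.\ $n_1^j = n^j$) where the admissible range for $x_{\mathcal{S}}^j$ collapses to a point, which I exclude by assuming non-degenerate tables $0 < n_1^\ell < n^\ell$. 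As a robustness check, the same conclusion follows from an explicit two-table instance, e.g.\ $n^1 = n^2 = 100$ with $\gamma^1 = 0.1$, $\gamma^2 = 0.9$, where $T_{\mathrm{cmh}}^{\mathrm{max}}$ drops as $x_{\mathcal{S}}^1$ increases from $0$ to $10$ at $x_{\mathcal{S}}^2 = 10$, sidestepping the differentiation entirely.
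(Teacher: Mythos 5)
Your proof is correct, and it takes a genuinely different route from the paper's. The paper gives no self-contained argument for this proposition: its proof consists of the remark that non-monotonicity amounts to the maximizer of $T_{\mathrm{cmh}}^{\mathrm{max}}$ over the box $0 \le z^{i} \le x_{\mathcal{S}}^{i}$ differing from the corner $(x_{\mathcal{S}}^{1},\hdots,x_{\mathcal{S}}^{K})$, followed by a deferral to the proof of Theorem~1, where the vertex structure of the maximizer (keep only the summands with the smallest $\alpha_{i}=(1-\gamma^{i})(1-x_{\mathcal{S}}^{i}/n^{i})$) makes such a defect plausible; the hypothesis $\gamma^{i} < 1/2 < \gamma^{j}$ is never explicitly invoked there. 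You instead exhibit a concrete witness: zero out all coordinates other than $i,j$, push $x_{\mathcal{S}}^{j}$ to $n^{j}-n_{1}^{j}$ so that $1-x_{\mathcal{S}}^{j}/n^{j}=\gamma^{j}$, note that $\gamma^{j}>1/2$ places the point strictly in the $A$-branch of the max, and compute the one-sided derivative in $x_{\mathcal{S}}^{i}$, whose sign reduces after factoring to $2\gamma^{j}(1-\gamma^{j}) \le 1/2 < 1-\gamma^{i}$. This makes visible exactly where each half of the hypothesis is used, which the paper's treatment does not. The two caveats you flag are genuine and correctly dispatched: the derivative argument is only valid while $A>B$ strictly (true at the witness and in a neighborhood), and non-degenerate margins $0 < n_{1}^{\ell} < n^{\ell}$ are needed so that $\gamma^{i}>0$ and neither admissible interval collapses to a point. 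Your closing numerical instance ($n^{1}=n^{2}=100$, $\gamma^{1}=0.1$, $\gamma^{2}=0.9$, $x_{\mathcal{S}}^{2}=10$: $T_{\mathrm{cmh}}^{\mathrm{max}}$ falls from $100$ at $x_{\mathcal{S}}^{1}=0$ to about $61.7$ at $x_{\mathcal{S}}^{1}=10$, and already to about $92$ at $x_{\mathcal{S}}^{1}=1$) additionally settles the integer-valued version of the claim without any calculus. What the paper's route buys is economy only, since the proposition falls out of machinery built for Theorem~1 anyway; what yours buys is a complete, checkable, standalone proof that localizes the role of the assumption.
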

The proof of Proposition~\ref{prop:non_monotonicity} can be found in the Appendix. In the light of Proposition~\ref{prop:non_monotonicity}, the case of significant pattern mining for meta-analysis appears to be hopeless. However, as we will show next, it is possible to generalize the existing Tarone-based significant pattern mining framework beyond the requirements of monotonicity in $\Psi(x_{\mathcal{S}})$. The key idea is summarised in the following proposition:
\begin{proposition}
\label{prop:pruning_definition}
	Children $\mathcal{S}^{\prime}$ of a pattern $\mathcal{S}$ can be pruned from the search space if and only if $\Psi_{\mathrm{prune}} > \delta$ where:
	\begin{align}
		T_{\mathrm{prune}} = \underset{\set{0 \le z^{i} \le x_{\mathcal{S}}^{i}}_{i=1}^{K}}{\max}T_{\mathrm{cmh}}^{\mathrm{max}}(\set{z^{i}}_{i=1}^{K}) \\
		\Psi_{\mathrm{prune}} = 1 - F_{\chi^{2}_{1}}(T_{\mathrm{prune}})
	\end{align}
\end{proposition}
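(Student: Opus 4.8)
The plan is to recast the pruning statement as a claim about a \emph{uniform} lower bound on the minimum attainable $p$-value over the whole subtree rooted at $\mathcal{S}$, and then to identify $\Psi_{\mathrm{prune}}$ with exactly this bound. First I would record the elementary monotonicity fact that, since $F_{\chi^{2}_{1}}$ is a strictly increasing distribution function, $\Psi(\{z^{i}\}) = 1 - F_{\chi^{2}_{1}}(T_{\mathrm{cmh}}^{\mathrm{max}}(\{z^{i}\}))$ is a decreasing function of $T_{\mathrm{cmh}}^{\mathrm{max}}$. Consequently a pattern with support vector $\{z^{i}\}$ is testable at level $\delta$ (i.e.\ $\Psi \le \delta$) if and only if $T_{\mathrm{cmh}}^{\mathrm{max}}(\{z^{i}\}) \ge F_{\chi^{2}_{1}}^{-1}(1-\delta)$. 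This lets me reason throughout in terms of $T_{\mathrm{cmh}}^{\mathrm{max}}$, which is the quantity Proposition~\ref{prop:tcmh_max} gives in closed form, rather than with $\Psi$ directly.

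The second ingredient is the anti-monotonicity of the support. By the tree assumption on $\mathcal{M}$, adding items to a pattern can only shrink the set of transactions containing it, so every descendant $\mathcal{S}^{\prime}$ of $\mathcal{S}$ satisfies $0 \le x_{\mathcal{S}^{\prime}}^{i} \le x_{\mathcal{S}}^{i}$ in every stratum $i$. Hence the support vectors of all descendants of $\mathcal{S}$ are confined to the box $\prod_{i=1}^{K}[0, x_{\mathcal{S}}^{i}]$. Since $T_{\mathrm{prune}}$ is by definition the maximum of $T_{\mathrm{cmh}}^{\mathrm{max}}$ over this box, every descendant obeys $T_{\mathrm{cmh}}^{\mathrm{max}}(\{x_{\mathcal{S}^{\prime}}^{i}\}) \le T_{\mathrm{prune}}$, and therefore $\Psi(\{x_{\mathcal{S}^{\prime}}^{i}\}) \ge \Psi_{\mathrm{prune}}$ by the monotonicity step.

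From here the soundness (``if'') direction is immediate: if $\Psi_{\mathrm{prune}} > \delta$, then $\Psi(\{x_{\mathcal{S}^{\prime}}^{i}\}) \ge \Psi_{\mathrm{prune}} > \delta$ for every descendant, so no descendant is testable and the whole subtree can be discarded without altering $\mathcal{I}_{T}(\delta)$. For the necessity (``only if'') direction I would argue the contrapositive: if $\Psi_{\mathrm{prune}} \le \delta$, then the maximiser $z^{*}$ of $T_{\mathrm{cmh}}^{\mathrm{max}}$ over the box attains $\Psi(z^{*}) = \Psi_{\mathrm{prune}} \le \delta$, so a descendant whose support equals $z^{*}$ would itself be testable. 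Because $\{x_{\mathcal{S}}^{i}\}$ is the only information available at node $\mathcal{S}$, and nothing in it rules out the existence of such a descendant, pruning cannot be declared safe. This establishes that $\Psi_{\mathrm{prune}} > \delta$ is both necessary and sufficient.

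I expect the main obstacle to be the ``only if'' half, where one must justify that the box relaxation is \emph{tight} rather than merely an over-approximation: that the maximiser $z^{*}$ genuinely corresponds to a support vector a descendant may attain, so that no strictly more aggressive rule is correct when only $\{x_{\mathcal{S}}^{i}\}$ is known. A secondary technical point is to confirm that the domain of the maximisation defining $T_{\mathrm{prune}}$ matches the one used to bound the descendants, so the inequality $T_{\mathrm{cmh}}^{\mathrm{max}}(\{x_{\mathcal{S}^{\prime}}^{i}\}) \le T_{\mathrm{prune}}$ holds for the actual integer support vectors; this is where I would lean on the explicit expression for $T_{\mathrm{cmh}}^{\mathrm{max}}$ from Proposition~\ref{prop:tcmh_max} to make the envelope argument precise.
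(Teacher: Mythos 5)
Your proof of the ``if'' (soundness) direction is exactly the paper's argument: anti-monotonicity of support across strata ($x_{\mathcal{S}^{\prime}}^{i} \le x_{\mathcal{S}}^{i}$ for every descendant), the definition of $T_{\mathrm{prune}}$ as the maximum of $T_{\mathrm{cmh}}^{\mathrm{max}}$ over the box $\prod_{i=1}^{K}[0,x_{\mathcal{S}}^{i}]$, and the fact that $\Psi = 1 - F_{\chi^{2}_{1}}(T_{\mathrm{cmh}}^{\mathrm{max}})$ is decreasing in $T_{\mathrm{cmh}}^{\mathrm{max}}$, giving $\Psi(\{x_{\mathcal{S}^{\prime}}^{i}\}) \ge \Psi_{\mathrm{prune}} > \delta$. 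Where you genuinely diverge is that the paper's proof stops there: despite the proposition being stated as ``if and only if,'' its two-line argument establishes only sufficiency and is silent on necessity. Your contrapositive argument for the ``only if'' half --- that when $\Psi_{\mathrm{prune}} \le \delta$ the box maximiser $z^{*}$ is an attainable descendant support, so no rule based solely on $\{x_{\mathcal{S}}^{i}\}$ can prune safely --- is the right reading of the claim and fills a real gap, provided you make one point explicit: the maximiser is realizable because, by Lemma~\ref{lemma:vertices} (and Theorem~\ref{theo:eval_max_corner}), the optimum of $T_{\mathrm{cmh}}^{\mathrm{max}}$ over the box is attained at a vertex, i.e.\ at an integer vector with each coordinate equal to $0$ or $x_{\mathcal{S}}^{i}$, which is a legitimate support vector for a child in every stratum. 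In short, the paper's proof buys exactly what the FastCMH algorithm needs (pruning never discards a testable pattern), while your version additionally justifies the tightness assertion implicit in the ``only if,'' namely that no more aggressive pruning rule depending only on the current support vector can be correct.
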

\begin{proof}
	Since children $\mathcal{S}^{\prime}$ of pattern $\mathcal{S}$ satisfy $\mathcal{S}$ satisfy $x_{\mathcal{S}^{\prime}} \le x_{\mathcal{S}}$, it follows immediately from the definition of $T_{\mathrm{prune}}$ that $\Psi(\set{x_{\mathcal{S}^{\prime}}^{i}}_{i=1}^{K}) \ge \Psi_{\mathrm{prune}} > \delta$ and, therefore, $\mathcal{S}^{\prime}$ must be non-testable.
\end{proof}

\begin{algorithm}[ht]
\begin{small}
\caption{Fast evaluation of $T_{\mathrm{prune}}$}\label{alg:eval_max_corner}
 \begin{algorithmic}[1]
 	\Function{eval\_T\_prune}{$\set{x_{\mathcal{S}}^{i}}_{i=1}^{K}$}
		\State $\beta_{l}^{i} \leftarrow (1-\gamma^{i})(1-\frac{x_{\mathcal{S}}^{i}}{n^{i}})$, $\beta_{r}^{i} \leftarrow (1-\gamma^{i})(1-\frac{x_{\mathcal{S}}^{i}}{n^{i}})$ for all $i=1,\hdots,K$
        \State $\mathtt{idx\_l} \leftarrow$ \texttt{argsort}($\set{\beta_{l}^{i}}_{i=1}^{K}$), $\mathtt{idx\_r} \leftarrow$ \texttt{argsort}($\set{\beta_{r}^{i}}_{i=1}^{K}$) \Comment{Ascending order}
        \State $H_{l}[i] := \frac{\left(\sum_{j=0}^{i}{\gamma^{\mathtt{idx\_l}[i]}x_{\mathcal{S}}^{\mathtt{idx\_l}[i]}}\right)^{2}}{\sum_{j=0}^{i}{\gamma^{\mathtt{idx\_l}[i]}(1-\gamma^{\mathtt{idx\_l}[i]})x_{\mathcal{S}}^{\mathtt{idx\_l}[i]}\left( 1-\frac{x_{\mathcal{S}}^{\mathtt{idx\_l}[i]}}{n^{\mathtt{idx\_l}[i]}}\right) }}$, $H_{l}^{max} \leftarrow \underset{i=0,\hdots,K-1}{\max}H_{l}[i]$
        \State $H_{r}[i] := \frac{\left(\sum_{j=0}^{i}{(1-\gamma^{\mathtt{idx\_r}[i]})x_{\mathcal{S}}^{\mathtt{idx\_r}[i]}}\right)^{2}}{\sum_{j=0}^{i}{\gamma^{\mathtt{idx\_r}[i]}(1-\gamma^{\mathtt{idx\_r}[i]})x_{\mathcal{S}}^{\mathtt{idx\_r}[i]}\left( 1-\frac{x_{\mathcal{S}}^{\mathtt{idx\_r}[i]}}{n^{\mathtt{idx\_r}[i]}}\right) }}$, $H_{r}^{max} \leftarrow \underset{i=0,\hdots,K-1}{\max}H_{r}[i]$
        \State Return $\max(H_{l}^{max},H_{r}^{max})$
	\EndFunction
 \end{algorithmic}
 \end{small}
 \end{algorithm}
 
Proposition~\ref{prop:pruning_definition} corresponds to the most general formulation of Tarone-based search space pruning in significant pattern mining. Nonetheless, it would only be useful if $T_{\mathrm{prune}}$ could be evaluated efficiently. A priori, the problem for the CMH test statistic seems intractable, since it corresponds to a discrete optimisation of a non-monotonic function over a set of size $O(\prod_{i=1}^{K}{\min(n_{1}^{i},n^{i}-n_{1}^{i})})$. However we have devised an $O(K \log(K))$ algorithm to evaluate $T_{\mathrm{prune}}$ for the CMH test statistic exactly:
\begin{theorem}
\label{theo:eval_max_corner}
	Algorithm~\ref{alg:eval_max_corner} computes the exact solution to the optimisation problem
	\begin{equation}
		\underset{\set{0 \le z^{i} \le x_{\mathcal{S}}^{i}}_{i=1}^{K}}{\max}T_{\mathrm{cmh}}^{\mathrm{max}}(\set{z^{i}}_{i=1}^{K})
	\end{equation}
		in the region $x_{\mathcal{S}}^{i} \in [0,\min(n_{1}^{i},n^{i}-n_{1}^{i})] \; \forall i=1,\hdots,K$ in $O(K \log(K))$ time.
\end{theorem}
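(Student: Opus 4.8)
The plan is to specialise $T_{\mathrm{cmh}}^{\mathrm{max}}$ to the admissible region, factorise the box maximisation into two mirror subproblems, reduce each to a search over the vertices of the box, and finally show that among the vertices the optimum is always a \emph{prefix} in the ordering that Algorithm~\ref{alg:eval_max_corner} uses. Inside $z^i\in[0,\min(n_1^i,n^i-n_1^i)]$ one has $a_{\mathcal{S},min}^i=0$ and $a_{\mathcal{S},max}^i=z^i$, so Proposition~\ref{prop:tcmh_max} becomes
\begin{equation*}
T_{\mathrm{cmh}}^{\mathrm{max}}(\set{z^i}) = \frac{\max\left[\left(\sum_i \gamma^i z^i\right)^2,\left(\sum_i (1-\gamma^i) z^i\right)^2\right]}{\sum_i \gamma^i(1-\gamma^i)z^i\left(1-\frac{z^i}{n^i}\right)}.
\end{equation*}
Since the outer $\max[\cdot,\cdot]$ shares the same denominator and the same box, $\max_z T_{\mathrm{cmh}}^{\mathrm{max}}=\max\big(\max_z L(z),\max_z R(z)\big)$, where $L$ keeps the numerator $(\sum_i\gamma^i z^i)^2$ and $R$ keeps $(\sum_i(1-\gamma^i)z^i)^2$. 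The two subproblems are identical under $\gamma^i\leftrightarrow 1-\gamma^i$, so I would analyse $L$ and transfer everything to $R$ by symmetry; these are precisely the branches $H_l$ and $H_r$ of the algorithm.

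Next comes the reduction to vertices, i.e. to $z^i\in\{0,x_{\mathcal{S}}^i\}$. Write $L(z)=N(z)/D(z)$ with $D(z)=\sum_i\gamma^i(1-\gamma^i)z^i(1-z^i/n^i)$ concave and nonnegative on the box (each factor is increasing on $[0,n^i/2]$ and $x_{\mathcal{S}}^i\le n^i/2$). I would freeze all coordinates but one and examine $u\mapsto L(u,z_{-i})$ on $[0,x_{\mathcal{S}}^i]$. Its sublevel set $\{u:L\le t\}$ is $\{u:(A+\gamma^i u)^2-t\,(D_0+\gamma^i(1-\gamma^i)u(1-u/n^i))\le 0\}$, with $A,D_0\ge 0$ the contributions of the frozen coordinates; for $t>0$ the leading coefficient $(\gamma^i)^2+t\gamma^i(1-\gamma^i)/n^i$ is strictly positive, so this nonpositive set is an interval. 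Hence $L$ is quasiconvex in each coordinate and is maximised at an endpoint, and sweeping the coordinates one at a time drives any maximiser to a vertex without decreasing $L$. At a vertex the subproblem is: choose $\mathcal{A}\subseteq\{1,\dots,K\}$ to maximise $P(\mathcal{A})^2/Q(\mathcal{A})$, where $P=\sum_{i\in\mathcal{A}}p^i$, $Q=\sum_{i\in\mathcal{A}}q^i$, $p^i=\gamma^i x_{\mathcal{S}}^i$, $q^i=\gamma^i(1-\gamma^i)x_{\mathcal{S}}^i(1-x_{\mathcal{S}}^i/n^i)$.

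The heart of the proof, and the step I expect to be hardest, is to show that the optimal $\mathcal{A}$ is a prefix once items are sorted by $p^i/q^i$ decreasing — equivalently by $\beta_l^i=(1-\gamma^i)(1-x_{\mathcal{S}}^i/n^i)=q^i/p^i$ increasing, exactly the sort key of the algorithm (with $\beta_r^i=\gamma^i(1-x_{\mathcal{S}}^i/n^i)$ for $R$). I would take an optimal $\mathcal{A}^\ast$ of minimal cardinality with value $t^\ast=P^2/Q$ and argue it has no gap. If some $b\notin\mathcal{A}^\ast$ precedes some $a\in\mathcal{A}^\ast$ (so $p^b/q^b\ge p^a/q^a$), then optimality forbids improving $t^\ast$ by adding $b$ or by removing $a$; expanding both inequalities and using $t^\ast Q=P^2$ gives $\frac{p^b(2P+p^b)}{q^b}\le t^\ast\le\frac{p^a(2P-p^a)}{q^a}$. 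But $p^b/q^b\ge p^a/q^a$ together with $2P+p^b\ge 2P-p^a\ge 0$ forces the reverse inequality, so all three quantities coincide; in particular removing $a$ leaves the value at $t^\ast$, contradicting minimal cardinality. Thus $\mathcal{A}^\ast$ is a prefix, the global optimum of each subproblem equals the best of the $K$ prefix values $H_l[i]$ (resp. $H_r[i]$), and the returned $\max(H_l^{\max},H_r^{\max})$ is exact. The runtime then follows immediately: computing the $\beta$'s and the prefix sums costs $O(K)$, the two sorts cost $O(K\log K)$, and the two prefix scans cost $O(K)$, for $O(K\log K)$ in total.
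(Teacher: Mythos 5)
Your proof is correct, and while it follows the same high-level skeleton that the algorithm dictates (split into the two branches $L=T_l$ and $R=T_r$, reduce the box maximisation to vertices, then show the optimal vertex is a prefix under the $\beta$-sort), both technical steps are argued by genuinely different means than in the paper. For the vertex reduction, the paper (Lemmas~\ref{lemma:lemma1} and~\ref{lemma:vertices}) computes the coordinate-wise partial derivatives of $T_l,T_r$ and factors each as a nonnegative prefactor times an affine function of $z^j$ with nonnegative slope, so every one-dimensional slice is increasing, decreasing, or has a single interior minimum; your sublevel-set argument (for $t>0$, $\{u : L\le t\}$ is the nonpositivity set of a quadratic in $u$ with leading coefficient $(\gamma^i)^2+t\gamma^i(1-\gamma^i)/n^i>0$, hence an interval, so each slice is quasiconvex and maximised at an endpoint) reaches the same conclusion with less computation. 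The real divergence is the prefix property, which is the heart of the theorem: the paper casts the vertex problem as maximising $\left(\sum_i\delta_i l_i\right)^2/\sum_i\delta_i\alpha_i l_i$ over $\delta\in\{0,1\}^K$ and proves equation~\eqref{eq:theo:condition} by induction on $K$ — a $K=2$ base case, then an induction step showing at most one coordinate of the maximiser can vanish, followed by an aggregation of the remaining terms into a single effective term and a three-term case analysis — whereas your exchange argument (take a minimal-cardinality optimal $\mathcal{A}^*$, suppose $b\notin\mathcal{A}^*$ precedes $a\in\mathcal{A}^*$, and squeeze $t^*$ between $p^b(2P+p^b)/q^b$ and $p^a(2P-p^a)/q^a$) replaces that entire induction with a few lines of algebra. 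Your route is in fact slightly stronger than you state: since $p^a,p^b>0$, the chain $p^b(2P+p^b)/q^b \ge (p^a/q^a)(2P+p^b) > (p^a/q^a)(2P-p^a)$ is strict, so the equality case — and with it the minimal-cardinality device — is never needed; this also closes the one loose end in your write-up, namely that ``removing $a$'' is meaningless when $\mathcal{A}^*$ is a singleton, and it shows that no tie in the sort key can straddle the boundary of an optimal set. What the paper's induction buys is some structural insight (the problem genuinely reduces to the two- and three-term cases); what yours buys is brevity, self-containedness, and robustness, including the correct identification of the second sort key $\beta_r^i=\gamma^i(1-x_{\mathcal{S}}^i/n^i)$, which fixes a typo in the pseudocode of Algorithm~\ref{alg:eval_max_corner}.
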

The complexity statement in Theorem~\ref{theo:eval_max_corner} follows immediately from the pseudocode since, for large $K$, the runtime is dominated by the sorting operation. The proof of correctness can be found in the Appendix. 

Equipped with Algorithm~\ref{alg:eval_max_corner}, FastCMH prunes the search space efficiently and correctly in line~\ref{algline:evaluate_pruning} of the function \texttt{process\_next} with a call to the wrapper function \texttt{is\_not\_prunable}. The latter evaluates $T_{\mathrm{prune}}$ for the current pattern $\mathcal{S}$ by using Algorithm~\ref{alg:eval_max_corner} and return True if children of pattern $\mathcal{S}$ cannot be pruned for the search space. Whenever the search space space cannot be pruned, the block of code between lines~\ref{algline:cont_enum_begin} and \ref{algline:cont_enum_end} iteratively continues exploring the search space $\mathcal{M}$ using a depth-first strategy. The algorithm naturally stops when no more testable patterns remain at a certain level $\delta$, at which point the algorithm returns the corrected significance threshold $\delta^{*}_{\mathrm{tar}}= \alpha / \left\vert\mathcal{I}_{T}(\delta)\right\vert$, that is, the desired target FWER $\alpha$ over the final number of testable patterns $\left\vert\mathcal{I}_{T}(\delta)\right\vert$.

\section{Experiments} \label{sec:exp}

This section describes experiments which exhibit the three main contributions of the \texttt{FastCMH} algorithm: its improved detection performance (statistical power), its superior speed, and its ability to handle categorical data, when compared to approaches that use naive multiple testing correction procedures and/or do not take categorical information into account. 
The advantages of the \texttt{FastCMH} algorithm become clear in situations where the data can be split into several categories and there are a huge number of tests; then using Tarone's testability criterion can require many computations of the maximum possible value of the CMH test statistic.
We study the following instance of pattern mining: suppose we are given binary $x_i[j]$ sequences with class labels $y_i \in \{0,1 \}$, and the sequences may be split into categories $1,2, \dots, K$. We try to find subsequences $[\tau_m:\tau_m + \ell_m]$ such that $\max x_i[\tau_m:\tau_m + \ell_m]$, the maximum over all binary variables in this sequence, correlates with class membership.

\textbf{Simulation study} We compare three main methods: (1) \texttt{FastCMH}, our proposed method uses which Tarone's testability and efficiently computes the CMH statistic, (2) \texttt{FAIS-$\chi^2$}, which uses Tarone's testability, but does not take the categories into account, and so does not use the CMH test, (3) \texttt{Bonferroni-CMH}, which does not use Tarone's testability, but does use the CMH test. 
In order to investigate the difference in power between the three methods, we contruct $n$ binary sequences $x_i$ of length $L$, and with $K$ classes, there are $n/K$ samples in each class. In each class, half of the samples have label $y_i=0$ (controls) and the other half have label $y_i=1$ (cases). 
Initially each element $x_i[j]$ is sampled from a Bernoulli distribution with probability $p_1$ of being a $1$, i.e. $x_i[j] \sim \text{B}(1, p_1)$. A significant subsequence starting at $\tau_k$ with length $\ell_k$, $x_i[\tau_k:\tau_k+\ell_k-1]$ is created by sampling each element from a Bernoulli distribution with parameter $1-(1-p_{\text{case}})^{\ell_k}$ if $y_i=0$. This ensures that the probability of at least one 1 occurring in the subsequence $x_i[\tau_k:\tau_k+\ell_k-1]$ is $p_{\text{case}}$. If the label $y_i=0$, then the subsequence is left as before (with probability $p_1$ of each entry being a $1$).


\textit{Power} There are two complementary situations where \texttt{FastCMH} has improved power: first, it has improved detection power of true significant subsequences, due its use of Tarone's testability criterion, when compared to \texttt{Bonferroni-CMH}; second, it will often (correctly) omit subsequences which appear to be significant, but are actually highly correlated with the categorical labels rather than the class labels. 
Recall that if $\beta$ is the probability of a method returning a false negative (i.e. the Type II error), then the (statistical) power is defined as $1-\beta$. In other words, when attempting to detect significant subsequences, it is the probability of correctly detecting a significant subsequence. 
Figure~\ref{fig:exp:main}(a) shows the results of running the \texttt{FastCMH}, \texttt{FAIS-$\chi^2$} and \texttt{Bonferroni-CMH} methods over the data generated with parameters: $n=500$, $L=10,000$, $p_1=0.2$, $K=2$, and one significant subsequence of length $5$ at location $2500$ in each sequence. In this setting, each category is generated in the same manner. 

One notices that both \texttt{FastCMH} and \texttt{FAIS-$\chi^2$} have higher power than \texttt{Bonferroni-CMH} for $p_{\text{case}} \in [0.3, 0.8]$. 
In fact the results for \texttt{FastCMH} and \texttt{FAIS-$\chi^2$} are so similar that they are virtually identical.
A similar experiment can be performed, by fixing the value of $p_{\text{case}}$ and allowing the length of $L$ to vary between $1000$ and $100,000$, while keeping the other parameters the same as before. The results of this experiment are shown in the Supplmentary material, and show that when $L$ is very large, the power of the \texttt{Bonferroni-CMH} method can become zero, while the power for \texttt{FastCMH} and \texttt{FAIS-$\chi^2$} are non-zero. 

\begin{figure}
\centering
\begin{subfigure}{0.32\textwidth}
  \centering
  \includegraphics[width=\linewidth]{./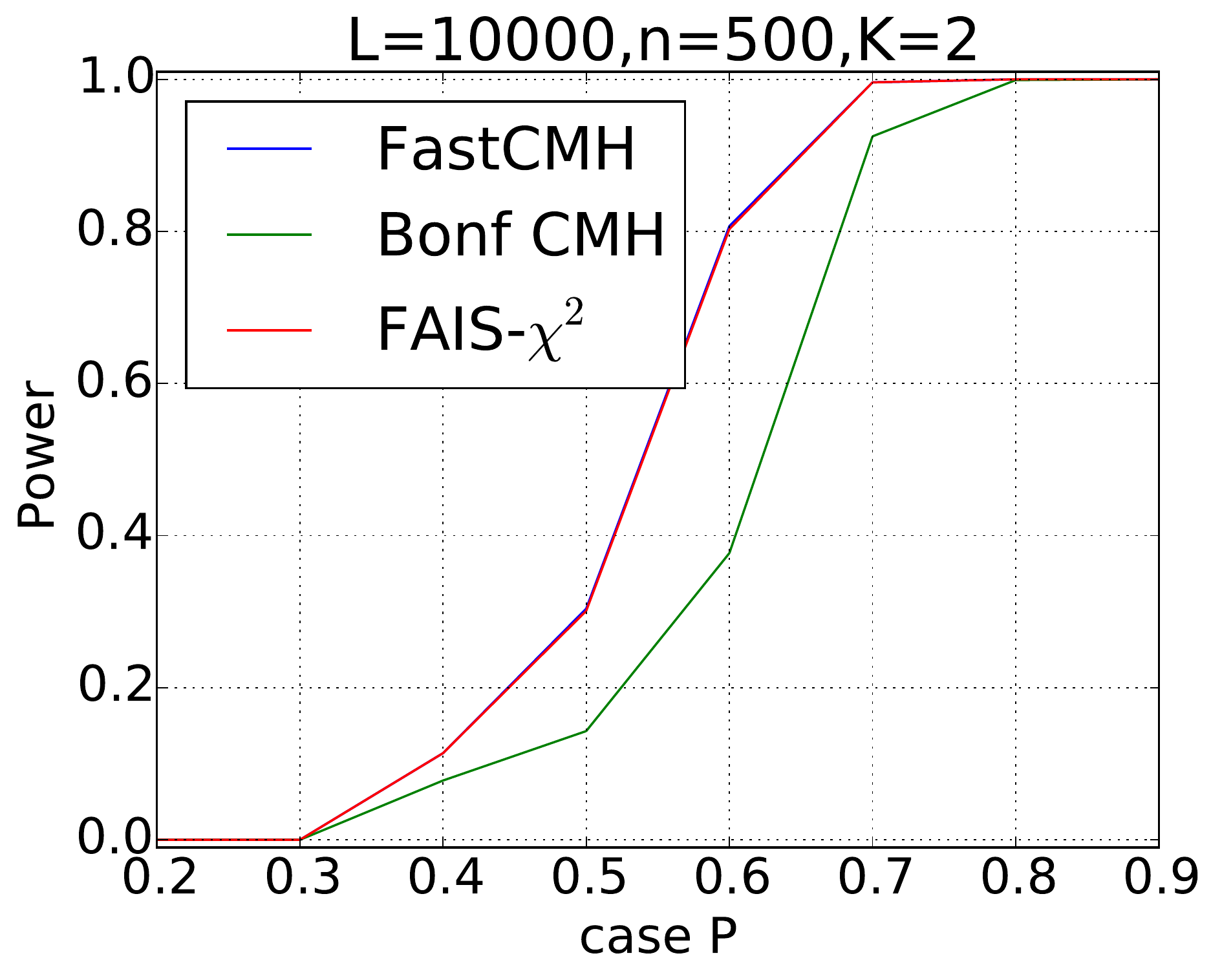}
  \caption{ }
  \label{fig:sub1power}
\end{subfigure}%
\begin{subfigure}{0.32\textwidth}
  \centering
  \includegraphics[width=\linewidth]{./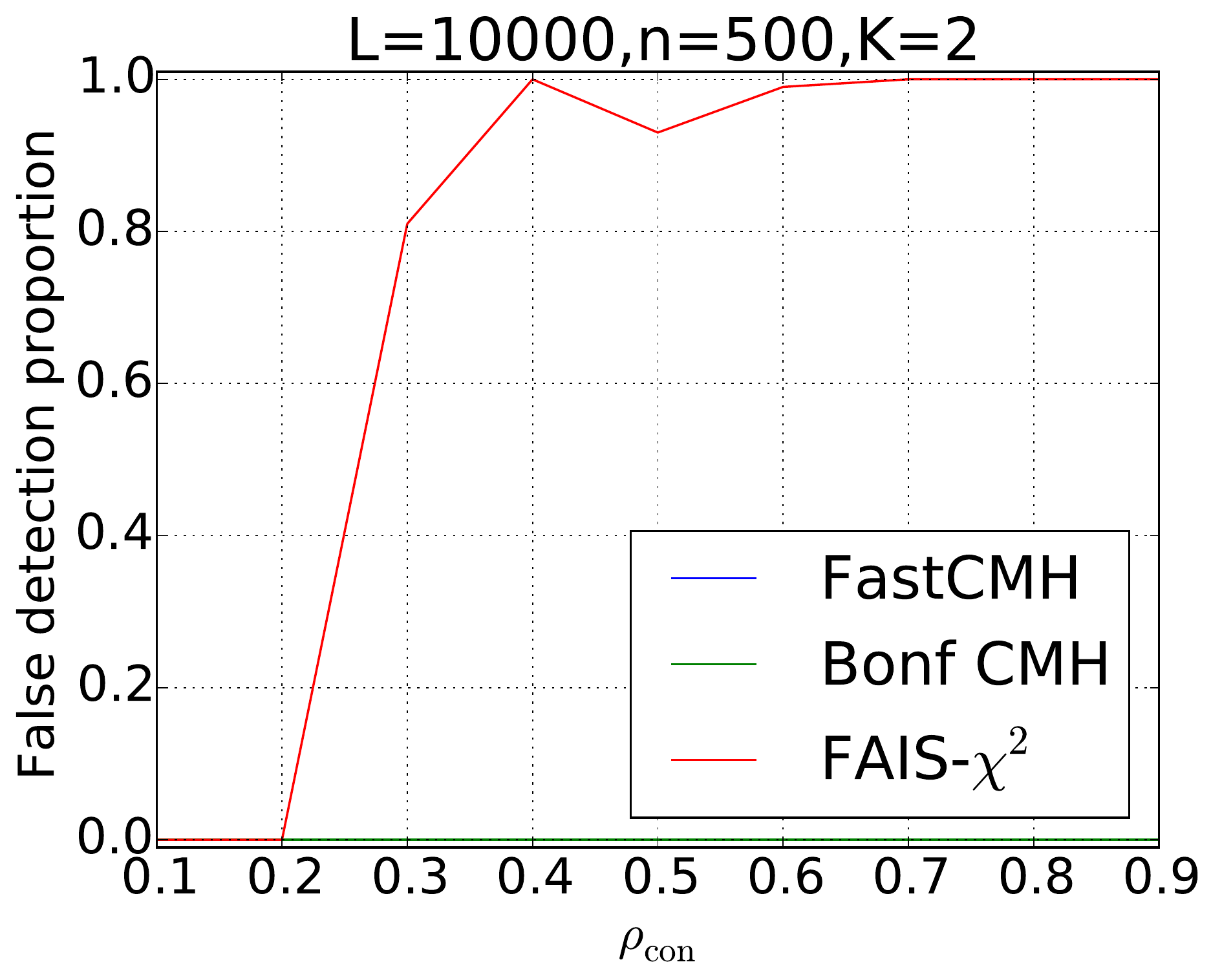}
  \caption{ }
  \label{fig:sub2power}
\end{subfigure}
\begin{subfigure}{0.32\textwidth}
  \centering
  \includegraphics[width=\linewidth]{./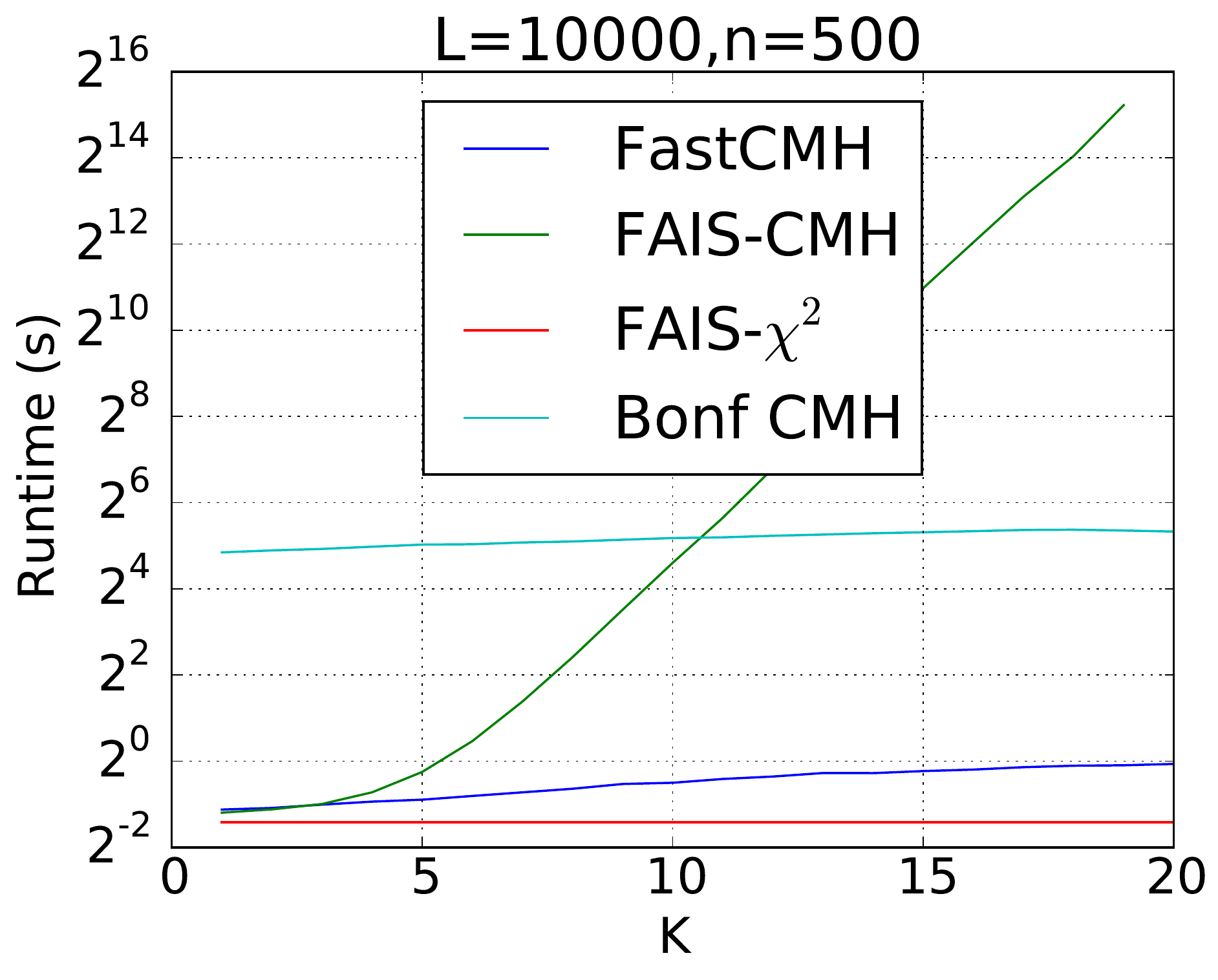}
  \caption{ }
  \label{fig:sub3speed}
\end{subfigure}
\caption{(a) A comparison of the power of \texttt{FastCMH}, \texttt{FAIS-$\chi^2$} and \texttt{Bonferroni-CMH} for detecting true significant subsequences, as $p_{\text{case}}$ varies. (b) The proportion of confounded significant subsequences falsely detected by each of those three algorithms. (c) A plot showing the runtimes of \texttt{FastCMH}, \texttt{FAIS-$\chi^2$}, \texttt{FAIS-CMH} and \texttt{Bonferroni CMH}. }
\label{fig:exp:main}
\end{figure}
So far, the sequences for the different categories have been generated in the same manner, which leads to little difference between \texttt{FastCMH} and \texttt{FAIS-$\chi^2$}. In the Supplementary material a procedure is described for contstructing a \emph{confounded significant subsequence}, which is a subsequence that is highly correlated with the categorical labels, and the categorical labels themselves are correlated with the class labels $y_i$. An experiment, the same as before but now with only these confounded significant subsequences, can be conducted. The results, in Figure~\ref{fig:exp:main}(b), show that \texttt{FastCMH} and \texttt{Bonferroni-CMH} do not detect these confounded significant subsequences, but the \texttt{FAIS-$\chi^2$} method, which does not take categorical labels into account and does not use the CMH test, often incorrectly detects these confounded significant subsequences. This illustrates the value of using the CMH test, and in particular the \texttt{FastCMH} method. In the next set of experiments, we show that \texttt{FastCMH} is significantly faster and more efficient than the brute-force \texttt{Bonferroni-CMH} method.

\textit{Speed} In addition to the three methods described above, we investigate the speed of (4) \texttt{FAIS-CMH}, a method which uses Tarone's testability criterion, but computes CMH in a naive way. Theorem~1 shows that \texttt{FastCMH} is equivalent to this naive approach, so it was not included in the power simulations above. 
Figure~\ref{fig:exp:main}(c) shows that \texttt{FastCMH} is several orders of magnitudes faster than \text{Bonferroni-CMH} and \texttt{FAIS-CMH}. In fact, \texttt{FAIS-CMH}, which computes the maximum CMH statistic in a naive manner, increases drastically as $K$ increases, while \texttt{FastCMH} increases almost linearly. \texttt{FastCMH} is comparable in speed to \texttt{FAIS-$\chi^2$}, but recall that \texttt{FAIS-$\chi^2$} does not take the categories $K$ into account, and that Figure~\ref{fig:exp:main}(b) shows that it is susceptible to falsely detecting confounded significant subsequences. Overall, these experiments have shown that our method \texttt{FastCMH} is significantly faster and has better detection performance in comparison to competitor methods.


\textbf{Real Data Application: Arabidopsis}
We perform significant subsequence search on genetic data from \textit{Arabidopsis thaliana}~\cite{atwell2010genome}, a plant model organism. Each plant is represented by a sequence of 214,051 binary variables (variants in its DNA sequence) and belongs to one of two classes (phenotypes). Significant patterns are subsequences of binary sequence variants 
that are correlated with the phenotype of the plant. We focus on three relevant datasets, each describing the presence/absence of a particular phenotype (YEL, LY, LES). Covariates represent geographic origin of the plants, here represented by four distinct clusters. We compare FastCMH to a $\chi^2$-test on each dataset and observe that conditioning on the covariates drastically reduces the number of significant subsequences found. Randomly permuting the covariates and repeating the experiment 150 times shows that this is a statistically significant reduction in the number of patterns found. The p-values based on this null distribution associated to each dataset are (0.007,0.007,0.070), respectively. This indicates that FastCMH manages to remove false positives findings that were caused by not considering geographic origin as a covariate. (See Supplement for a detailed description of the experiment.)

 \begin{table}[t]
 \centering 
\caption{Number of significant patterns found by $\chi^2$-test and FastCMH}
\begin{tabular}{rlll}
 \hline
Datasets & YEL & LY & LES \\
 \hline
 $\chi^2$-test & 1069 & 26 & 20 \\
FastCMH & 73 & 2 & 8 \\
  \hline
\end{tabular}
\label{tb:res}
\end{table}

\section{Conclusions}

In this article, we have presented an approach to significant pattern mining in the presence of categorical covariates with $K$ states. Key to our approach is that we define a variant of the Cochran-Mantel-Haenszel test for $K$ 2 $\times$ 2 contingency tables~\cite{mantel1959}, which prunes the vast search space of candidate patterns by excluding {\it non-testable} patterns.
We propose an efficient, exact algorithm to compute this test. Its runtime is only linear in $K$, whereas the standard approach scales exponentially in $K$.
In our experiments, it combines computational efficiency with high statistical power on simulated and real-world datasets. Our findings open the door to important applications of significant pattern mining, for instance, in personalized medicine, which we will explore in future work.

\clearpage
\appendix

\section{Appendix}

This appendix provides proofs of the theoretical results described in the main part of the paper, further details about the experimental study, as well as additional experiments.

\section*{Proofs}

\begin{proposition}
	\begin{equation}
	T_{\mathrm{cmh}}^{\mathrm{max}}(\set{x_{\mathcal{S}}^{i}}_{i=1}^{K}) = \frac{\max\left[ \left( \sum_{i=1}^{K}{a_{\mathcal{S},min}^{i} - \gamma^{i}x_{\mathcal{S}}^{i}} \right)^{2}   , \left( \sum_{i=1}^{K}{a_{\mathcal{S},max}^{i} - \gamma^{i}x_{\mathcal{S}}^{i}}\right)^{2} \right]}{\sum_{i=1}^{K}{\gamma^{i}(1-\gamma^{i})x_{\mathcal{S}}^{i}\left( 1-\frac{x_{\mathcal{S}}^{i}}{n^{i}}\right) }}
	\end{equation}
\end{proposition}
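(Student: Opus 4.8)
The plan is to exploit the fact that the denominator of $T_{\mathrm{cmh}}$ in equation~\eqref{eq:cmh_def} depends only on the fixed margins $x_{\mathcal{S}}^{i}$, $n_{1}^{i}$ and $n^{i}$ (through $\gamma^{i}$), and is therefore constant over the feasible region $\set{a_{\mathcal{S},min}^{i} \le a_{\mathcal{S}}^{i} \le a_{\mathcal{S},max}^{i}}_{i=1}^{K}$. Since this denominator is also manifestly nonnegative, maximising $T_{\mathrm{cmh}}$ over the feasible values of $\set{a_{\mathcal{S}}^{i}}_{i=1}^{K}$ reduces to maximising its numerator, the squared quantity $S^{2}$, where $S := \sum_{i=1}^{K}(a_{\mathcal{S}}^{i} - \gamma^{i}x_{\mathcal{S}}^{i})$, over the same box constraints. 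The entire problem thus collapses from a $K$-dimensional optimisation of a ratio to the maximisation of a one-dimensional convex image.

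First I would observe that $S$ is an affine function of the vector $(a_{\mathcal{S}}^{1},\hdots,a_{\mathcal{S}}^{K})$ that is \emph{separable}, since the coefficient of each $a_{\mathcal{S}}^{i}$ equals $+1$ and the terms do not interact. Because the feasible set is a Cartesian product of intervals and $S$ is a sum of terms each strictly increasing in a single coordinate, each coordinate may be optimised independently, so the range of $S$ over the box is exactly the interval $[S_{min}, S_{max}]$. Here $S_{min} = \sum_{i=1}^{K}(a_{\mathcal{S},min}^{i} - \gamma^{i}x_{\mathcal{S}}^{i})$ is attained by setting every $a_{\mathcal{S}}^{i} = a_{\mathcal{S},min}^{i}$, and $S_{max} = \sum_{i=1}^{K}(a_{\mathcal{S},max}^{i} - \gamma^{i}x_{\mathcal{S}}^{i})$ by setting every $a_{\mathcal{S}}^{i} = a_{\mathcal{S},max}^{i}$.

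Next I would invoke the convexity of the map $S \mapsto S^{2}$: a convex function on a closed bounded interval attains its maximum at one of the two endpoints. Hence the maximum of $S^{2}$ over the feasible region is $\max(S_{min}^{2}, S_{max}^{2})$, and dividing by the constant denominator recovers precisely the claimed closed form for $T_{\mathrm{cmh}}^{\mathrm{max}}(\set{x_{\mathcal{S}}^{i}}_{i=1}^{K})$.

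The argument is essentially routine, and the only point requiring a little care is the separability step, namely justifying that the extreme values of the sum $S$ over the box are realised \emph{simultaneously} by pushing every coordinate to the same (lower, respectively upper) end of its interval, rather than at some mixed corner. This is what makes the result clean: because $S$ decomposes as a sum of terms each depending on a single $a_{\mathcal{S}}^{i}$ with positive unit slope, there is no coupling between coordinates, so the two relevant corners of the $K$-dimensional box suffice and no further case analysis over the remaining $2^{K}-2$ corners is needed.
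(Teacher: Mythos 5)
Your proof is correct and follows essentially the same route as the paper's: both reduce the $K$-dimensional box-constrained problem to a one-dimensional one by noting that $T_{\mathrm{cmh}}$ depends on the $a_{\mathcal{S}}^{i}$ only through their sum (the denominator being a constant in these variables), and then use convexity of the square to place the maximum at an endpoint of the resulting interval. The only cosmetic difference is that you make the constancy of the denominator and the separability of the sum explicit, whereas the paper phrases the endpoint argument as ``a quadratic with positive semidefinite Hessian over a compact set attains its maximum at the extremes.''
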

\begin{proof}
	The CMH test statistic can be written as:
    \begin{equation}
    	T_{\mathrm{cmh}} = \frac{\left(\sum_{i=1}^{K}{a_{\mathcal{S}}^{i} - \gamma^{i}x_{\mathcal{S}}^{i}}\right)^{2}}{\sum_{i=1}^{K}{\gamma^{i}(1-\gamma^{i})x_{\mathcal{S}}^{i}\left( 1-\frac{x_{\mathcal{S}}^{i}}{n^{i}}\right) }} = \frac{\left(a_{\mathcal{S}}-\sum_{i=1}^{K}{ \gamma^{i}x_{\mathcal{S}}^{i}}\right)^{2}}{\sum_{i=1}^{K}{\gamma^{i}(1-\gamma^{i})x_{\mathcal{S}}^{i}\left( 1-\frac{x_{\mathcal{S}}^{i}}{n^{i}}\right) }}
    \end{equation}
     where $a_{\mathcal{S}} = \sum_{i=1}^{K}{a_{\mathcal{S}}^{i}}$. Moreover, we have $a_{\mathcal{S}} \in \left[\sum_{i=1}^{K}{a_{\mathcal{S},min}^{i}},\sum_{i=1}^{K}{a_{\mathcal{S},max}^{i}}\right]$. Clearly, $T_{\mathrm{cmh}}$ is a quadratic function of $a_{\mathcal{S}}$ with positive semidefinite Hessian over a compact set. Therefore, its maximum must be attained in the extremes; that is, either when $a_{S}^{i}=a_{\mathcal{S},min}^{i} \; \forall i=1,\hdots,K$ or when $a_{S}^{i}=a_{\mathcal{S},max}^{i} \; \forall i=1,\hdots,K$.
\end{proof}
 \begin{proposition}
	The minimum attainable $p$-value function for the CMH statistic, $\Psi(\set{x_{\mathcal{S}}^{i}}_{i=1}^{K})$, is not monotonically decreasing with respect to each $x_{\mathcal{S}}^{i}$ in $x_{\mathcal{S}}^{i} \in [0,\min(n_{1}^{i},n^{i}-n_{1}^{i})]$ if there exist $i$,$j$ in $1,\hdots,K$ such that $\frac{n_{1}^{i}}{n^{i}} < \frac{1}{2},\frac{n_{1}^{j}}{n^{j}} > \frac{1}{2}$
\end{proposition}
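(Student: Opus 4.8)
The plan is to reduce the claim to a single pair of tables and then exhibit an explicit violation of monotonicity. First I would invoke Proposition~\ref{prop:tcmh_max}: in the region $x_{\mathcal{S}}^{i}\in[0,\min(n_{1}^{i},n^{i}-n_{1}^{i})]$ the fixed margins force $a_{\mathcal{S},min}^{i}=0$ and $a_{\mathcal{S},max}^{i}=x_{\mathcal{S}}^{i}$, so that
\begin{equation}
T_{\mathrm{cmh}}^{\mathrm{max}}=\frac{\max\!\left[A^{2},B^{2}\right]}{D},\qquad A=\sum_{i}\gamma^{i}x_{\mathcal{S}}^{i},\quad B=\sum_{i}(1-\gamma^{i})x_{\mathcal{S}}^{i},\quad D=\sum_{i}\gamma^{i}(1-\gamma^{i})x_{\mathcal{S}}^{i}\Big(1-\tfrac{x_{\mathcal{S}}^{i}}{n^{i}}\Big).
\end{equation}
Since $A,B\ge 0$ and $\Psi=1-F_{\chi^{2}_{1}}(T_{\mathrm{cmh}}^{\mathrm{max}})$ with $F_{\chi^{2}_{1}}$ strictly increasing, $\Psi$ is monotonically decreasing in a coordinate $x_{\mathcal{S}}^{i}$ if and only if $T_{\mathrm{cmh}}^{\mathrm{max}}$ is monotonically increasing in it. Hence it suffices to find a single configuration in which $T_{\mathrm{cmh}}^{\mathrm{max}}$ strictly decreases when one $x_{\mathcal{S}}^{i}$ is increased.

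Next I would exploit the mixed-sign hypothesis. Pick indices with $\gamma^{i}<\tfrac12<\gamma^{j}$, set $x_{\mathcal{S}}^{k}=0$ for all $k\neq i,j$ (a feasible choice that zeroes those tables' contributions to $A,B,D$), write $p=\gamma^{i}$ and $q=\gamma^{j}$, hold $x_{\mathcal{S}}^{j}=:s>0$ fixed, and vary $t:=x_{\mathcal{S}}^{i}$. Because $q>\tfrac12$ we have $A>B$ at $t=0$, and the switch $A=B$ occurs only at $t=(2q-1)s/(1-2p)$, so the $A$-branch stays active for small $t$. There $T_{\mathrm{cmh}}^{\mathrm{max}}(t)=A(t)^{2}/D(t)$ with $A(t)=pt+qs$ growing only at the small slope $p<\tfrac12$, while $D(t)$ grows as well. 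Differentiating $\log T$ at $t=0$ yields the sign of $2p/(qs)-p(1-p)/\big(q(1-q)s(1-s/n^{j})\big)$, which is negative exactly when $\frac{1-p}{(1-q)(1-s/n^{j})}>2$. This inequality is the crux: it is precisely the competing signs $p<\tfrac12<q$ that let $A$ and $B$ cross and that make the factor $(1-p)/(1-q)$ large enough for the denominator to outgrow the numerator on the dominant branch.

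To make the argument unconditional I would close with an explicit integer instance, e.g. $n^{i}=n^{j}=100$, $n_{1}^{i}=10$, $n_{1}^{j}=90$ (so $p=0.1$, $q=0.9$), $s=5$, comparing $t=0$ with $t=1$: a direct evaluation gives $T_{\mathrm{cmh}}^{\mathrm{max}}(0)=20.25/0.4275$ against $T_{\mathrm{cmh}}^{\mathrm{max}}(1)=21.16/0.5166$, so $T_{\mathrm{cmh}}^{\mathrm{max}}$ strictly decreases and $\Psi$ strictly increases, contradicting monotone decrease. The main obstacle I anticipate is bookkeeping rather than depth: I must verify feasibility (here $t\in\{0,1\}\subseteq[0,10]$ and $s=5\in[0,10]$) and, crucially, that the variation never leaves the $A$-branch of the $\max$, so that the simplified expression $A^{2}/D$ is the active one throughout (here $B(1)=1.4<4.6=A(1)$, confirmed since $1<(2q-1)s/(1-2p)=5$). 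The explicit numbers settle both points, and for general $K$ the zeroed coordinates transfer the two-table computation verbatim.
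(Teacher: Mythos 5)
Your proof is correct, and it takes a genuinely different route from the paper's. The paper gives no explicit counterexample: its proof of this proposition merely observes that it suffices for the box-constrained maximizer of $T_{\mathrm{cmh}}^{\mathrm{max}}$ to differ from the corner $(x_{\mathcal{S}}^{1},\hdots,x_{\mathcal{S}}^{K})$, and then defers that fact to the proof of Theorem~\ref{theo:eval_max_corner}, where the maximizer is shown to retain only the terms with the smallest $\alpha_{i}$'s. You instead reduce to two strata via the feasible choice $x_{\mathcal{S}}^{k}=0$ for $k\neq i,j$, track which branch of the max is active, and produce explicit integer margins and counts where $T_{\mathrm{cmh}}^{\mathrm{max}}$ strictly drops; your arithmetic checks out ($20.25/0.4275\approx 47.4$ versus $21.16/0.5166\approx 41.0$, with the $A$-branch active at both points since $1<(2q-1)s/(1-2p)=5$), so $\Psi$ strictly increases in $x_{\mathcal{S}}^{i}$, violating monotone decrease. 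What each approach buys: yours is elementary, self-contained and checkable by hand; the paper's is indirect but costs nothing extra given machinery it needs anyway for the pruning algorithm. Two cautions. First, your aside that the log-derivative condition $\frac{1-p}{(1-q)(1-s/n^{j})}>2$ is ``precisely'' the competing-signs condition is loose: for $p=0.49$, $q=0.51$ and small $s/n^{j}$ the inequality fails even though $p<\tfrac12<q$, so it is a criterion that holds for a well-chosen $s$, not an equivalence. Second, if one reads the proposition as asserting non-monotonicity for \emph{every} margin configuration with $\gamma^{i}<\tfrac12<\gamma^{j}$ (rather than for some instance), a single numeric example does not literally suffice; but your derivative argument closes that gap: taking $s=n^{j}-n_{1}^{j}$ gives $1-s/n^{j}=\gamma^{j}$, and then $1-p>\tfrac12>2\gamma^{j}(1-\gamma^{j})$ holds for all such margins, so the derivative of the active branch at $t=0$ is negative in full generality. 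The paper itself reads the claim existentially (``can occur in practice''), so your counterexample matches its intended content while being considerably more concrete than the paper's deferral.
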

\begin{proof}
	We just need to show that the case
	\begin{equation}
		\underset{\set{0 \le z^{i} \le x_{\mathcal{S}}^{i}}_{i=1}^{K}}{\max}T_{\mathrm{cmh}}^{\mathrm{max}}(\set{z^{i}}_{i=1}^{K}) \neq (x_{\mathcal{S}}^{1},x_{\mathcal{S}}^{2},\hdots,x_{\mathcal{S}}^{K})
	\end{equation}
	can occur in practice. That can be readily seen during the proof of Theorem 1, so we defer the proof of Proposition 2 to that.
\end{proof}

\begin{theorem}
	Algorithm 2 computes the exact solution to the optimisation problem
	\begin{equation}
		\underset{\set{0 \le z^{i} \le x_{\mathcal{S}}^{i}}_{i=1}^{K}}{\max}T_{\mathrm{cmh}}^{\mathrm{max}}(\set{z^{i}}_{i=1}^{K})
	\end{equation}
	in the region $x_{\mathcal{S}}^{i} \in [0,\min(n_{1}^{i},n^{i}-n_{1}^{i})] \; \forall i=1,\hdots,K$ in $O(K \log(K))$ time.
\end{theorem}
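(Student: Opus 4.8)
The plan is to reduce the discrete maximisation over the box $\prod_{i=1}^{K}[0,x_{\mathcal{S}}^{i}]$, which a priori ranges over exponentially many candidate optima, to a maximisation over only $O(K)$ ``prefix'' configurations that can be enumerated after a single sort. First I would invoke the region hypothesis $x_{\mathcal{S}}^{i}\le\min(n_{1}^{i},n^{i}-n_{1}^{i})$: there the inner variable $z^{i}$ satisfies $a_{\mathcal{S},min}^{i}=0$ and $a_{\mathcal{S},max}^{i}=z^{i}$, so by Proposition~\ref{prop:tcmh_max} the objective splits as $T_{\mathrm{cmh}}^{\mathrm{max}}(\set{z^{i}}_{i=1}^{K})=\max\left(H_{l},H_{r}\right)$ with
\begin{equation}
H_{l}=\frac{\left(\sum_{i}\gamma^{i}z^{i}\right)^{2}}{\sum_{i}\gamma^{i}(1-\gamma^{i})z^{i}\left(1-\frac{z^{i}}{n^{i}}\right)},\qquad H_{r}=\frac{\left(\sum_{i}(1-\gamma^{i})z^{i}\right)^{2}}{\sum_{i}\gamma^{i}(1-\gamma^{i})z^{i}\left(1-\frac{z^{i}}{n^{i}}\right)}.
\end{equation}
Since $\max_{z}\max(H_{l},H_{r})=\max\left(\max_{z}H_{l},\max_{z}H_{r}\right)$, I would analyse the two branches independently; $H_{r}$ is just $H_{l}$ under $\gamma^{i}\leftrightarrow 1-\gamma^{i}$, so it suffices to argue for $H_{l}$ and transcribe.

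The first step is to show each branch is maximised at a corner, i.e.\ $z^{i}\in\{0,x_{\mathcal{S}}^{i}\}$. Fixing all coordinates but one, $H_{l}$ becomes a ratio $N(u)/D(u)$ in $u=z^{i}$ where $N$ is a squared affine function (convex, $u^{2}$-coefficient $(\gamma^{i})^{2}$) and $D$ is concave quadratic ($u^{2}$-coefficient $-\gamma^{i}(1-\gamma^{i})/n^{i}$). For any level $c>0$ the super-level set $\set{u\mid N(u)/D(u)\ge c}$ equals $\set{u\mid N(u)-cD(u)\ge 0}$, and $N-cD$ is an upward parabola; its super-level set is therefore the complement of an interval, so the restriction of $H_{l}$ to $[0,x_{\mathcal{S}}^{i}]$ attains its maximum at an endpoint (levels $c\le 0$ are vacuous as $H_{l}\ge 0$). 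A coordinate-wise exchange then moves a global maximiser to a corner without decreasing the objective: at the global maximiser its value equals the endpoint value along each coordinate, so each coordinate may be snapped to $0$ or $x_{\mathcal{S}}^{i}$ in turn.

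The main obstacle is the second step, reducing the $2^{K}$ corners to $O(K)$ prefixes. Writing $p_{i}=\gamma^{i}x_{\mathcal{S}}^{i}$ and $q_{i}=\gamma^{i}(1-\gamma^{i})x_{\mathcal{S}}^{i}(1-x_{\mathcal{S}}^{i}/n^{i})$, a corner is a subset $S$ with $H_{l}(S)=P(S)^{2}/Q(S)$ where $P(S)=\sum_{i\in S}p_{i}$, $Q(S)=\sum_{i\in S}q_{i}$. I would linearise using $P^{2}/Q=\max_{t}(2tP-t^{2}Q)$ and swap the maximisations:
\begin{equation}
\max_{S}\frac{P(S)^{2}}{Q(S)}=\max_{t}\max_{S}\sum_{i\in S}\left(2tp_{i}-t^{2}q_{i}\right).
\end{equation}
For fixed $t>0$ the inner problem is separable with optimiser $S(t)=\set{i\mid p_{i}/q_{i}>t/2}$, exactly a prefix of the indices sorted by decreasing $p_{i}/q_{i}$; since $p_{i}/q_{i}=\left[(1-\gamma^{i})(1-x_{\mathcal{S}}^{i}/n^{i})\right]^{-1}$, this is ascending order of $\beta_{l}^{i}=(1-\gamma^{i})(1-x_{\mathcal{S}}^{i}/n^{i})$, matching Algorithm~\ref{alg:eval_max_corner}. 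Letting $t$ sweep $(0,\infty)$ realises every such prefix, and the sandwich $c^{\ast}=2t^{\ast}P(S^{\ast})-t^{\ast 2}Q(S^{\ast})\le\max_{S}(2t^{\ast}P-t^{\ast 2}Q)\le\max_{\text{prefix}}P^{2}/Q$ at $t^{\ast}=P(S^{\ast})/Q(S^{\ast})$ forces the unconstrained optimum to be attained at a prefix. The identical argument for $H_{r}$ yields sorting by $\beta_{r}^{i}=\gamma^{i}(1-x_{\mathcal{S}}^{i}/n^{i})$ (so the formula for $\beta_{r}^{i}$ in Algorithm~\ref{alg:eval_max_corner} should read $\gamma^{i}(1-x_{\mathcal{S}}^{i}/n^{i})$).

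Finally I would assemble the claim: Algorithm~\ref{alg:eval_max_corner} sorts each branch in $O(K\log K)$, evaluates the $K$ prefix ratios $H_{l}[i],H_{r}[i]$ by accumulating numerators and denominators incrementally in $O(K)$, and returns $\max(H_{l}^{max},H_{r}^{max})$, which by the two steps above equals the true box maximum; the runtime is dominated by the sort. As a by-product this proves Proposition~\ref{prop:non_monotonicity}: when some stratum has $\gamma^{i}<\tfrac12$ and another $\gamma^{j}>\tfrac12$, the optimal prefix of each branch can be strict, so $\arg\max_{z}T_{\mathrm{cmh}}^{\mathrm{max}}(z)\neq(x_{\mathcal{S}}^{1},\dots,x_{\mathcal{S}}^{K})$, giving the required non-monotonicity, which I would make concrete with a minimal two-stratum instance.
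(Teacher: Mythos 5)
Your proof is correct, and while it shares the paper's two-stage skeleton (first reduce the box maximisation to the $2^{K}$ corners $z^{i}\in\{0,x_{\mathcal{S}}^{i}\}$, then reduce the corners to $K$ sorted prefixes per branch), both stages are argued by genuinely different means. For the corner stage the paper computes $\partial T_{l}/\partial z^{j}$ and shows its sign is governed by an affine function of $z^{j}$ with nonnegative slope, so each coordinate section is increasing, decreasing, or has a single interior minimum (Lemma~\ref{lemma:lemma1}); your quasi-convexity argument --- for $c>0$ the function $N-cD$ is an upward parabola, so every nonempty super-level set of $N/D$ in $[0,x_{\mathcal{S}}^{i}]$ contains an endpoint --- reaches the same conclusion with no calculus. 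The larger divergence is the prefix stage, which is the heart of the theorem: the paper proves the prefix-optimality condition by induction on $K$, with a $K=2$ base case and an induction step that itself requires a contradiction argument and an auxiliary merged-variable ($K=3$-type) reduction; that argument is long and delicate. Your Dinkelbach-style linearisation $P^{2}/Q=\max_{t}\left(2tP-t^{2}Q\right)$, the exchange of the two maximisations, and the sandwich at $t^{*}=P(S^{*})/Q(S^{*})$ give the prefix property in a few lines, handle all $K$ at once, and produce the sort keys ($p_{i}/q_{i}$ decreasing, i.e.\ $\beta^{i}$ ascending) as a by-product rather than by inspection --- this buys both brevity and, frankly, more airtight rigor than the paper's induction. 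Your correction of the algorithm's typo, $\beta_{r}^{i}=\gamma^{i}\bigl(1-x_{\mathcal{S}}^{i}/n^{i}\bigr)$, is right and consistent with the generic form used in the paper's appendix. Two minor caveats: you should dispose of the degenerate cases explicitly (strata with $x_{\mathcal{S}}^{i}=0$, for which $q_{i}=0$, and the empty prefix, excluded since the optimum value is positive), and your closing remark that the argument yields Proposition~\ref{prop:non_monotonicity} is as informal as the paper's own deferral --- it does need the concrete two-stratum instance you promise.
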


We decompose the proof of Theorem~\ref{theo:eval_max_corner} in several parts. First of all, note that when $x_{\mathcal{S}}^{i} \in [0,\min(n_{1}^{i},n^{i}-n_{1}^{i})] \; \forall i=1,\hdots,K$, the maximum attainable CMH test statistic $T_{\mathrm{cmh}}^{\mathrm{max}}$ can be written as  $T_{\mathrm{cmh}}^{\mathrm{max}}(\set{x_{\mathcal{S}}^{i}}_{i=1}^{K}) = \max(T_{l}(\set{x_{\mathcal{S}}^{i}}_{i=1}^{K}),T_{r}(\set{x_{\mathcal{S}}^{i}}_{i=1}^{K}))$ where:

	\begin{align}
		T_{l}(\set{x_{\mathcal{S}}^{i}}_{i=1}^{K}) &= \frac{\left(\sum_{i=1}^{K}{\gamma^{i}x_{\mathcal{S}}^{i}} \right)^{2}}{\sum_{i=1}^{K}{\gamma^{i}(1-\gamma^{i})x_{\mathcal{S}}^{i}\left( 1-\frac{x_{\mathcal{S}}^{i}}{n^{i}}\right) }} \\
		T_{r}(\set{x_{\mathcal{S}}^{i}}_{i=1}^{K}) &= \frac{\left( \sum_{i=1}^{K}{(1-\gamma^{i})x_{\mathcal{S}}^{i}}\right)^{2} }{\sum_{i=1}^{K}{\gamma^{i}(1-\gamma^{i})x_{\mathcal{S}}^{i}\left( 1-\frac{x_{\mathcal{S}}^{i}}{n^{i}}\right) }}
	\end{align}
	
Using that, first we prove the following lemmas:
\begin{lemma}
\label{lemma:lemma1}
	The functions $T_{l}(\set{z^{i}}_{i=1}^{K}), T_{r}(\set{z^{i}}_{i=1}^{K})$ in the region $0 \le z^{i} \le x_{\mathcal{S}}^{i}$ with $x_{\mathcal{S}}^{i} \le \min(n_{1}^{i},n^{i}-n_{1}^{i})] \; \forall \, i=1,\hdots,K$ as a function of a single variable $z^{j}$ while keeping the other $K-1$ variables fixed satisfies one of the three conditions: (1) it is monotonically increasing in $0 \le z^{j} \le x_{\mathcal{S}}^{j}$; (2) it is monotonically decreasing in $0 \le z^{j} \le x_{\mathcal{S}}^{j}$ or (3) has a single local minimum in $0 \le z^{j} \le x_{\mathcal{S}}^{j}$.
    
    As a consequence, both functions $T_{l}(\set{z^{i}}_{i=1}^{K}), T_{r}(\set{z^{i}}_{i=1}^{K})$ are maximized with respect to $z^{j}$ while the other variables are kept fixed either when $z^{j}=0$ or when $z^{j}=x_{\mathcal{S}}^{j}$.
\end{lemma}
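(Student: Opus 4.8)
The plan is to fix an index $j$ and treat each of $T_{l}$ and $T_{r}$ as a function of the single variable $z^{j}$, absorbing the contributions of the other $K-1$ (fixed) coordinates into constants. In both cases the resulting one-variable function takes the form
\begin{equation}
f(z) = \frac{(a+bz)^{2}}{c + dz - ez^{2}},
\end{equation}
where for $T_{l}$ one sets $a = \sum_{i \ne j}\gamma^{i}z^{i}$, $b = \gamma^{j}$, while for $T_{r}$ one sets $a = \sum_{i\ne j}(1-\gamma^{i})z^{i}$, $b = 1-\gamma^{j}$; in both cases $c = \sum_{i \ne j}\gamma^{i}(1-\gamma^{i})z^{i}(1-z^{i}/n^{i})$, $d = \gamma^{j}(1-\gamma^{j})$ and $e = \gamma^{j}(1-\gamma^{j})/n^{j}$. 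The first step I would record is that $a,b,c,d,e$ are all non-negative, and that $a+bz > 0$ and $c + dz - ez^{2} > 0$ on the admissible range: indeed $0 \le z^{j} \le x_{\mathcal{S}}^{j} \le \min(n_{1}^{j}, n^{j}-n_{1}^{j}) < n^{j}$ guarantees $1 - z^{j}/n^{j} > 0$, so the denominator is a sum of non-negative terms and $f$ is smooth and positive.

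The heart of the argument is the derivative computation. Writing $N(z) = (a+bz)^{2}$ and $D(z) = c + dz - ez^{2}$, the sign of $f'(z)$ equals the sign of $N'(z)D(z) - N(z)D'(z)$, since $D^{2} > 0$. Factoring out the common non-negative factor $(a+bz)$ leaves
\begin{equation}
g(z) = 2b\,(c + dz - ez^{2}) - (a+bz)(d - 2ez).
\end{equation}
The key point — the step I expect to carry the whole lemma — is that the quadratic terms cancel exactly, leaving the \emph{linear} function
\begin{equation}
g(z) = (2bc - ad) + (bd + 2ae)\,z,
\end{equation}
whose slope $bd + 2ae$ is non-negative. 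It is precisely this cancellation of the $z^{2}$ term that rules out an interior local maximum and thereby forces the claimed three-way dichotomy; verifying the cancellation is the only genuine computation, and everything else is bookkeeping.

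With $g$ linear and non-decreasing, the case analysis is immediate. If $g(0) = 2bc - ad \ge 0$, then $g \ge 0$ throughout, so $f$ is monotonically increasing (case 1). If $g \le 0$ across all of $[0, x_{\mathcal{S}}^{j}]$, then $f$ is monotonically decreasing (case 2). If $g$ crosses zero from negative to positive inside the interval, then $f$ first decreases and then increases, exhibiting a single interior local minimum (case 3). In every case $f$ has no interior local maximum, so its maximum over the closed interval $[0, x_{\mathcal{S}}^{j}]$ is attained at $z^{j}=0$ or $z^{j}=x_{\mathcal{S}}^{j}$, which is the stated consequence. Since $T_{l}$ and $T_{r}$ differ only in the values assigned to $a$ and $b$, the identical derivation applies to both functions verbatim.
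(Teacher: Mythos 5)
Your proof is correct and takes essentially the same approach as the paper's: both compute the partial derivative with respect to $z^{j}$, factor out a non-negative term, and observe that the remaining factor is affine in $z^{j}$ with non-negative slope (your explicit cancellation of the $z^{2}$ terms in $g(z)=(2bc-ad)+(bd+2ae)z$ is exactly the paper's claim that $A(\set{z^{i}}_{i=1}^{K})=b+\mu z^{j}$ with $\mu \ge 0$), yielding the same trichotomy and endpoint conclusion. Your parametrization via the constants $a,b,c,d,e$ is a cleaner bookkeeping of the identical argument.
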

\begin{proof}
	Computing the partial derivative of $T_{l}(\set{z^{i}}_{i=1}^{K})$ and $T_{r}(\set{z^{i}}_{i=1}^{K})$ with respect to $z^{j}$ yields:
    \begin{equation*}
    	\frac{\partial T_{l}(\set{z^{i}}_{i=1}^{K})}{\partial z^{j}} = \Lambda_{l}(\set{z^{i}}_{i=1}^{K}))A_{l}(\set{z^{i}}_{i=1}^{K}))
    \end{equation*}
    \begin{equation*}
    	\frac{\partial T_{r}(\set{z^{i}}_{i=1}^{K})}{\partial z^{j}} = \Lambda_{r}(\set{z^{i}}_{i=1}^{K}))A_{r}(\set{z^{i}}_{i=1}^{K}))
    \end{equation*}
    with
    \begin{equation*}
    	\Lambda_{l}(\set{z^{i}}_{i=1}^{K}) = \frac{\gamma^{j}\sum_{i=1}^{K}{\gamma^{i}z^{i}}}{\left(\sum_{i=1}^{K}{\gamma^{i}(1-\gamma^{i})z^{i}(1-\frac{z^{i}}{n^{i}})}\right)^{2}}
    \end{equation*}
    \begin{equation*}
    	\Lambda_{r}(\set{z^{i}}_{i=1}^{K})= \frac{(1-\gamma^{j})\sum_{i=1}^{K}{(1-\gamma^{i})z^{i}}}{\left(\sum_{i=1}^{K}{\gamma^{i}(1-\gamma^{i})z^{i}(1-\frac{z^{i}}{n^{i}})}\right)^{2}}
    \end{equation*}
    \begin{equation*}
    	A_{l}(\set{z^{i}}_{i=1}^{K}) = \sum_{i=1,i\not=j}^{K}{\gamma^{i}z^{i}\left(2(1-\gamma^{i})(1-\frac{z^{i}}{n^{i}})-(1-\gamma^{j})\right)} + (1-\gamma^{j})\left(\gamma^{j}+\frac{2}{n^{j}}\sum_{i=1,i\not=j}^{K}{\gamma^{i}z^{i}}\right)z^{j}
    \end{equation*}
    \begin{equation*}
    	A_{r}(\set{z^{i}}_{i=1}^{K}) = \sum_{i=1,i\not=j}^{K}{(1-\gamma^{i})z^{i}\left(2\gamma^{i}(1-\frac{z^{i}}{n^{i}})-\gamma^{j}\right)} + \gamma^{j}\left((1-\gamma^{j})+\frac{2}{n^{j}}\sum_{i=1,i\not=j}^{K}{(1-\gamma^{i})z^{i}}\right)z^{j}
    \end{equation*}
    Because $\Lambda_{l}(\set{z^{i}}_{i=1}^{K}) \ge 0$ and $\Lambda_{r}(\set{z^{i}}_{i=1}^{K}) \ge 0$, the sign of the partial derivatives are determined by the sign of $A_{l}(\set{z^{i}}_{i=1}^{K}))$ and $A_{r}(\set{z^{i}}_{i=1}^{K}))$ respectively. In both cases, $A(\set{z^{i}}_{i=1}^{K}))$ can be expressed as $A(\set{z^{i}}_{i=1}^{K}))=b(\set{z^{i}}_{i=1,i \neq j}^{K}) + \mu(\set{z^{i}}_{i=1,i \neq j}^{K})z^{j}$. That is, as an affine function of $z^{j}$ where the intersect and slope is controlled by all other $K-1$ variables. Moreover, regardless of $\set{z^{i}}_{i=1,i \neq j}^{K}$, $\mu(\set{z^{i}}_{i=1,i \neq j}^{K}) \ge 0$. Therefore the partial derivatives are either always positive, always negative, or negative until a unique point where it crosses zero (minimum) and then positive. 
\end{proof}

\begin{lemma}
\label{lemma:vertices}
	The functions $T_{l}(\set{z^{i}}_{i=1}^{K})$ and $T_{r}(\set{z^{i}}_{i=1}^{K})$ in the region $0 \le z^{i} \le x_{\mathcal{S}}^{i}$, with $x_{\mathcal{S}}^{i} \le \min(n_{1}^{i},n^{i}-n_{1}^{i})] \; \forall \, i=1,\hdots,K$, are maximized when either $z^{i}=0$ or $z^{i}=x_{\mathcal{S}}^{i}$ for all $i=1,\hdots,K$. In other words, the size of the potential set of optimal values for $\set{z^{i}}_{i=1}^{K}$ is $2^{K}$.
\end{lemma}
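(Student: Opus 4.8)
The plan is to leverage the consequence of Lemma~\ref{lemma:lemma1} coordinate by coordinate. Fix $f$ to be either $T_{l}$ or $T_{r}$ and regard it as a continuous function on the compact box $B = \prod_{i=1}^{K}[0,x_{\mathcal{S}}^{i}]$ (the denominator is positive away from the origin provided each stratum contains samples of both classes, and the degenerate corner $z=0$ only yields the value $0$, which never attains the maximum unless $B=\set{0}$). Since $f$ is continuous on a compact set it attains a global maximizer $z^{\star}$; the goal is to show we may always take $z^{\star}$ to be a vertex of $B$, i.e. $z^{\star i}\in\set{0,x_{\mathcal{S}}^{i}}$ for every $i$.

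First I would establish a single-coordinate \emph{snapping} step. Fix a coordinate $j$ and freeze the remaining $K-1$ coordinates at their values in $z^{\star}$. By the consequence of Lemma~\ref{lemma:lemma1}, the restriction of $f$ to $z^{j}\in[0,x_{\mathcal{S}}^{j}]$ is maximized at an endpoint $e\in\set{0,x_{\mathcal{S}}^{j}}$. Because $z^{\star}$ is already a global maximizer of $f$ over $B$, its value cannot exceed this restricted maximum; conversely the restricted maximum cannot exceed the global one. Hence the two values coincide, and relocating the $j$-th coordinate of $z^{\star}$ to $e$ yields another global maximizer whose $j$-th coordinate now sits at an endpoint.

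The second step is to iterate this snapping over $j=1,\hdots,K$. The crucial observation is that snapping coordinate $j$ leaves all other coordinates untouched, so once a coordinate has been moved to an endpoint it remains there while the later coordinates are processed. After all $K$ snaps the resulting point is still a global maximizer and has every coordinate at an endpoint; it is therefore one of the $2^{K}$ vertices of $B$, which proves the claim. Equivalently, the same conclusion follows by induction on $K$, peeling off one coordinate at a time and invoking Lemma~\ref{lemma:lemma1} in both the base case and the inductive step.

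I expect the only genuine subtlety — the main obstacle — to be that one cannot optimize the coordinates independently: Lemma~\ref{lemma:lemma1} guarantees that the optimum in $z^{j}$ lies at an endpoint but not \emph{which} endpoint, and the optimal endpoint can flip as the other coordinates vary. The sequential relocation argument sidesteps this precisely because it never commits to a particular endpoint in advance and never disturbs coordinates that have already been fixed, so no joint combinatorial optimization is required beyond what Lemma~\ref{lemma:lemma1} already supplies.
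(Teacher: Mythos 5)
Your proof is correct and takes essentially the same route as the paper: the paper writes the box maximum as nested univariate maxima $\max_{z^{1}}\max_{z^{2}}\cdots\max_{z^{K}} T_{l}$ and applies Lemma~\ref{lemma:lemma1} recursively to pin each coordinate to an endpoint, which is precisely the induction you note is equivalent to your coordinate-snapping argument. The only differences are cosmetic --- you relocate an existing global maximizer (obtained via compactness) coordinate by coordinate, while the paper recurses on nested maximizations, and you additionally flag the degenerate corner $z=0$, a minor point the paper leaves implicit.
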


\begin{proof}
	The proof is analogous for both $T_{l}(\set{z^{i}}_{i=1}^{K})$ and $T_{r}(\set{z^{i}}_{i=1}^{K})$. Therefore, we focus on $T_{l}(\set{z^{i}}_{i=1}^{K})$ without loss of generality.
	We can write:
	\begin{equation}
		\underset{\set{0 \le z^{i} \le x_{\mathcal{S}}^{i}}_{i=1}^{K}}{\max}T_{l}(\set{z^{i}}_{i=1}^{K}) = \underset{z^{1}}{\max}\,\underset{z^{2}}{\max}\hdots\underset{z^{K}}{\max}\,T_{l}(\set{z^{i}}_{i=1}^{K})
	\end{equation}
	By Lemma~\ref{lemma:lemma1}:
		\begin{equation}
	 		\underset{z^{K}}{\max}\,T_{l}(\set{z^{i}}_{i=1}^{K}) = \max \left(T_{l}(\set{z^{i}}_{i=1,i\neq K}^{K},0),T_{l}(\set{z^{i}}_{i=1,i\neq K}^{K},x_{\mathcal{S}}^{K}) \right)
		\end{equation}
	 The same argument can be then applied recursively to the functions $T_{l}(\set{z^{i}}_{i=1,i\neq K}^{K},0)$ and $T_{l}(\set{z^{i}}_{i=1,i\neq K}^{K},x_{\mathcal{S}}^{K})$ in the variable $z^{K-1}$ and so on until the conclusion of the theorem follows.
\end{proof}

One can rewrite both function $T_{l}(\set{z^{i}}_{i=1}^{K})$ and $T_{r}(\set{z^{i}}_{i=1}^{K})$ generically as:
\begin{align}
	T = \frac{\left(\sum_{i=1}^{K}{l_{i}(\alpha_{i})}\right)^{2}}{\sum_{i=1}^{K}{\alpha_{i}l_{i}(\alpha_{i})}}
\end{align}
with $\alpha_{i} \in [0,1]$ and $l_{i}(\alpha_{i})>0$. In particular, $\alpha_{i}=(1-\gamma^{i})(1-\frac{x_{\mathcal{S}}^{i}}{n^{i}})$, $l_{i}(\alpha_{i})=n_{1}^{i}\left(1-\frac{\alpha_{i}}{1-\gamma^{i}}\right)$ for $T_{l}(\set{z^{i}}_{i=1}^{K})$ and $\alpha_{i}=\gamma^{i})(1-\frac{x_{\mathcal{S}}^{i}}{n^{i}})$, $l_{i}(\alpha_{i})=(n^{i}-n_{1}^{i})\left(1-\frac{\alpha_{i}}{1-\gamma^{i}}\right)$ for $T_{r}(\set{z^{i}}_{i=1}^{K})$. Let us introduce $K$ binary indicator variables $\delta_{1},\hdots,\delta_{K}$ in $T$:
\begin{equation}
	T(\delta_{1},\hdots,\delta_{K}) = \frac{\left(\sum_{i=1}^{K}{\delta_{i}l_{i}(\alpha_{i})}\right)^{2}}{\sum_{i=1}^{K}{\delta_{i}\alpha_{i}l_{i}(\alpha_{i})}}
\end{equation}
The only actual requirements we assume for $T(\delta_{1},\hdots,\delta_{K})$ as defined above are that with $\alpha_{i} \in [0,1]$ and $l_{i}(\alpha_{i})>0$. Suppose further that:
\begin{equation}
\label{eq:theo:condition}
	\underset{\delta_{1},\hdots,\delta_{K}}{\argmax} \; T(\delta_{1},\hdots,\delta_{K}) = (\underbrace{1,1,\hdots,1}_{R},\underbrace{0,0,\hdots,0}_{K-R})
\end{equation}
holds with $R>0$. Informally, equation~\eqref{eq:theo:condition} means that the maximum is achieved by keeping the terms in the summation corresponding to the $R$ smallest $\alpha_{i}$. Since we know that $T_{\mathrm{cmh}}^{\mathrm{max}}(\set{x_{\mathcal{S}}^{i}}_{i=1}^{K}) = \max(T_{l}(\set{x_{\mathcal{S}}^{i}}_{i=1}^{K}),T_{r}(\set{x_{\mathcal{S}}^{i}}_{i=1}^{K}))$, and both $T_{l}(\set{z^{i}}_{i=1}^{K})$ and $T_{r}(\set{z^{i}}_{i=1}^{K})$ are maximised when either $z^{i}=0$ or $z^{i}=x_{\mathcal{S}}^{i}$ by lemma~\ref{lemma:vertices}, then it follows that if $\alpha_{j} > \alpha_{i}$ and $z^{j,*}=x_{\mathcal{S}}^{j}$ then $z^{i,*}=x_{\mathcal{S}}^{i}$. But that is exactly the condition that Algorithm 2 exploits to reduce the computational complexity of evaluating $T_{\mathrm{prune}}$ from the value $O(2^{K})$ resulting from lemma~\ref{lemma:vertices} to just $O(K\log(K)$. In other words, to finish proving Theorem 1, we just need to show that equation~\eqref{eq:theo:condition} holds.

We will prove it by induction. First, we show that the statement holds for $K=2$.

\begin{lemma}
	Following all the previous definitions, we have that:
	\begin{equation}
		\underset{\delta_{1},\delta_{2}}{\argmax} \; T(\delta_{1},\delta_{2}) \in \left\{(1,0),(1,1)\right\}
	\end{equation}
\end{lemma}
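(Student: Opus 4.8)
The plan is to reduce this base case to a single scalar comparison. Following the sorted convention used in the induction, I would assume without loss of generality that $\alpha_1 \le \alpha_2$, so that ``keeping the smaller-$\alpha$ index'' corresponds to forcing $\delta_1=1$; the claim $\argmax T \in \{(1,0),(1,1)\}$ is then precisely the statement that $\delta_1=1$ at some maximiser. There are only four points in $\{0,1\}^2$. The point $(0,0)$ makes both numerator and denominator vanish and is discarded (the other three points give strictly positive values of $T$ whenever $\alpha_i>0$), so I only have to compare $(1,0)$, $(0,1)$ and $(1,1)$. The crucial simplification is that it suffices to prove $T(0,1)\le T(1,1)$: once $(0,1)$ is dominated by $(1,1)$, the maximum over the three remaining points is attained at $(1,0)$ or $(1,1)$, which is exactly the lemma.

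To establish $T(0,1)\le T(1,1)$, I would compute the difference directly, using $T(0,1)=l_2^2/(\alpha_2 l_2)=l_2/\alpha_2$ and $T(1,1)=(l_1+l_2)^2/(\alpha_1 l_1+\alpha_2 l_2)$. Putting both over the common denominator $\alpha_2(\alpha_1 l_1+\alpha_2 l_2)$ and expanding, I expect the numerator to collapse to a product,
\[
T(1,1)-T(0,1)=\frac{l_1\bigl(\alpha_2 l_1+(2\alpha_2-\alpha_1)l_2\bigr)}{\alpha_2\bigl(\alpha_1 l_1+\alpha_2 l_2\bigr)}.
\]
Since $\alpha_1\le\alpha_2$ forces $2\alpha_2-\alpha_1\ge\alpha_2\ge 0$, and $l_1,l_2>0$ together with $\alpha_2>0$ keep both factors of the numerator and the whole denominator nonnegative, the right-hand side is $\ge 0$. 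Hence $T(1,1)\ge T(0,1)$ and the reduction above closes the argument.

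The only real work is the cross-multiplication and the recognition that the numerator factors so that its sign is governed by $\alpha_2-\alpha_1$ rather than by the (sign-indefinite) individual terms; everything else is bookkeeping. The one subtlety I would flag is positivity: the comparison is only meaningful when $\alpha_i>0$, since $\alpha_2=0$ would send $T(0,1)\to\infty$ and break the claim. In the CMH setting this is guaranteed because in the region $x_{\mathcal{S}}^{i}\le\min(n_1^i,n^i-n_1^i)$ one has $\gamma^i<1$ and $x_{\mathcal{S}}^{i}<n^i$, so each $\alpha_i=(1-\gamma^i)(1-x_{\mathcal{S}}^{i}/n^i)$ (and its $T_r$ analogue) is strictly positive; I would state this explicitly before the computation.
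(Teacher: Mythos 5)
Your proposal is correct and follows essentially the same route as the paper's proof: discard the degenerate point $(0,0)$, reduce the lemma to showing that $(0,1)$ is dominated by $(1,1)$ under the ordering $\alpha_1\le\alpha_2$, and verify this by cross-multiplying and factoring the numerator of $T(1,1)-T(0,1)$ as $l_1\bigl(\alpha_2 l_1+(2\alpha_2-\alpha_1)l_2\bigr)$, whose sign is controlled by $2\alpha_2-\alpha_1\ge 0$. Your explicit remark that $\alpha_i>0$ is needed (and holds in the CMH setting) is a welcome refinement of a point the paper leaves implicit, but it does not change the substance of the argument.
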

\begin{proof}
	The only possible contradicting case would be $\underset{\delta_{1},\delta_{2}}{\argmax} \; T(\delta_{1},\delta_{2}) = (0,1)$, since the case $(0,0)$ yields a trivial value for the function T. We show directly that under the assumption $\alpha_{1} \le \alpha_{2}$, the contradiction cannot happen. Indeed we have:
	
	\begin{align}
	\frac{(l_{1}(\alpha_{1})+l_{2}(\alpha_{2}))^{2}}{\alpha_{1}l_{1}(\alpha_{1})+\alpha_{2}l_{2}(\alpha_{2})} - \frac{l^{2}_{2}(\alpha_{2})}{\alpha_{2}l_{2}(\alpha_{2})} &= \frac{(l_{1}(\alpha_{1})+l_{2}(\alpha_{2}))^{2}\alpha_{2}l_{2}(\alpha_{2}) -  l^{2}_{2}(\alpha_{2})(\alpha_{1}l_{1}(\alpha_{1})+\alpha_{2}l_{2}(\alpha_{2}))}{(\alpha_{1}l_{1}(\alpha_{1})+\alpha_{2}l_{2}(\alpha_{2}))\alpha_{2}l_{2}(\alpha_{2})} \\ \nonumber
	 &= l_{1}(\alpha_{1})\frac{(l_{1}(\alpha_{1})+2l_{2}(\alpha_{2}))\alpha_{2}l_{2}(\alpha_{2}) - \alpha_{1}l^{2}_{2}(\alpha_{2})}{(\alpha_{1}l_{1}(\alpha_{1})+\alpha_{2}l_{2}(\alpha_{2}))\alpha_{2}l_{2}(\alpha_{2})} \\ \nonumber
	 &= l_{1}(\alpha_{1})\frac{\alpha_{2}l_{1}(\alpha_{1})l_{2}(\alpha_{2}) + (2\alpha_{2}-\alpha_{1})l^{2}_{2}(\alpha_{2})}{(\alpha_{1}l_{1}(\alpha_{1})+\alpha_{2}l_{2}(\alpha_{2}))\alpha_{2}l_{2}(\alpha_{2})}
	\end{align}
	
	Since $l_{i}(\alpha_{i}) \ge 0$ and $\alpha_{1} \le \alpha_{2}$, it follows that the numerator in the expression above is positive, thus $T(1,1)>T(0,1)$ contradicting the statement that $\underset{\delta_{1},\delta_{2}}{\argmax} \; T(\delta_{1},\delta_{2}) = (0,1)$.
\end{proof}

Now we prove the induction step. Suppose the statement holds for an arbitrary dimension $K$, we will show then it also holds for dimension $K+1$.

\begin{lemma}
	If we have:
	\begin{equation}
		\underset{\delta_{1},\hdots,\delta_{K}}{\argmax} \; \frac{\left(\sum_{i=1}^{K}{\delta_{i}l_{i}(\alpha_{i})}\right)^{2}}{\sum_{i=1}^{K}{\delta_{i}\alpha_{i}l_{i}(\alpha_{i})}} = (\underbrace{1,1,\hdots,1}_{R},\underbrace{0,0,\hdots,0}_{K-R})
\end{equation}
	Then:
	\begin{equation}
		\underset{\delta_{1},\hdots,\delta_{K},\delta_{K+1}}{\argmax} \; \frac{\left(\sum_{i=1}^{K}{\delta_{i}l_{i}(\alpha_{i})} + \delta_{K+1}l_{K+1}(\alpha_{K+1})\right)^{2}}{\sum_{i=1}^{K}{\delta_{i}\alpha_{i}l_{i}(\alpha_{i})} + \delta_{K+1}\alpha_{K+1}l_{K+1}(\alpha_{K+1})} = (\underbrace{1,1,\hdots,1}_{R^{\prime}},\underbrace{0,0,\hdots,0}_{(K+1)-R^{\prime}})
	\end{equation}
\end{lemma}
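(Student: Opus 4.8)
The plan is to prove that the optimal vertex has its set of ``on'' indices downward-closed in the $\alpha$-ordering, since this is precisely the prefix condition claimed. Throughout I write $S \subseteq \{1,\dots,K+1\}$ for the set of indices with $\delta_i = 1$, and set $N(S) = \sum_{i \in S} l_i(\alpha_i)$ and $D(S) = \sum_{i \in S} \alpha_i l_i(\alpha_i)$, so the objective is $g(S) = N(S)^2 / D(S)$. By Lemma~\ref{lemma:vertices} the maximiser is attained at a vertex, so it suffices to reason about subsets $S$.

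First I would condition on the value of the newly added coordinate $\delta_{K+1}$ at the optimum $S^{\ast}$. If $K+1 \notin S^{\ast}$, then the restriction of the $(K+1)$-dimensional problem to $\delta_{K+1}=0$ is literally the $K$-dimensional problem, so the first $K$ coordinates must maximise the $K$-variable objective; the induction hypothesis then forces them to be a prefix $(1^{R},0^{K-R})$, and appending the trailing $0$ keeps it a prefix. The substantive case is $K+1 \in S^{\ast}$, where I must simultaneously argue that the first $K$ coordinates form a prefix and that no ``gap'' is left before the last coordinate.

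The engine for the hard case is a single exchange inequality, which I expect to be the main obstacle and which I would isolate as a standalone claim: if $S^{\ast}$ is optimal and there exist $p \notin S^{\ast}$, $q \in S^{\ast}$ with $\alpha_p < \alpha_q$, a contradiction ensues. To prove it, write $N=N(S^{\ast})$ and $D=D(S^{\ast})$. Optimality means neither adding $p$ nor removing $q$ can increase $g$. Expanding $g(S^{\ast} \cup \{p\}) \le g(S^{\ast})$ and cancelling the common positive factor $l_p(\alpha_p)$ yields $\alpha_p \ge D(2N + l_p)/N^2 > 2D/N$, while expanding $g(S^{\ast} \setminus \{q\}) \le g(S^{\ast})$ and cancelling $l_q(\alpha_q)$ yields $\alpha_q \le D(2N - l_q)/N^2 < 2D/N$. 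Together these give $\alpha_p > \alpha_q$, contradicting $\alpha_p < \alpha_q$. Hence $S^{\ast}$ contains no inversion and is downward-closed, i.e. a prefix; applied to the case $K+1 \in S^{\ast}$ this rules out any switched-off coordinate $p \le K$ coexisting with the active coordinate $K+1$, and simultaneously forces the first $K$ coordinates to be a prefix, delivering $(1^{R^{\prime}},0^{(K+1)-R^{\prime}})$ as required.

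Finally I would dispose of the degenerate bookkeeping: the cancellations require $l_i(\alpha_i)>0$ (assumed) and $D>0$, and the inequalities are strict precisely because each $l_i(\alpha_i)>0$, so the argument tolerates ties $\alpha_p=\alpha_q$, which are not genuine inversions and may be ordered arbitrarily. The only boundary situation is an empty or singleton optimal set, where $g(S^{\ast}\setminus\{q\})$ collapses to $0$ and the removal inequality holds trivially, while the addition inequality still delivers $\alpha_p>2D/N\ge\alpha_q$, so the contradiction persists. Note that this exchange argument in fact establishes the prefix structure for all $K+1$ coordinates at once; the case split merely shows how the induction hypothesis handles the easy branch, with the exchange inequality closing the remaining branch.
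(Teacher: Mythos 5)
Your proof is correct, and it takes a genuinely different route from the paper's. The paper proves the induction step in two stages: assuming the branch $\delta_{K+1}=1$ wins, it first applies the inductive hypothesis to the $K$-dimensional problem obtained by deleting a putative gap coordinate $j$, concluding that at most one coordinate among $1,\hdots,K$ can be off; it then aggregates all remaining active terms into a single pseudo-term $(l_{0},\alpha_{0})$ with $\alpha_{0}$ the weighted average of the absorbed $\alpha_{i}$, reducing everything to a three-variable problem, which it settles by explicitly expanding $T(1,1,1)-T(1,0,1)$ in the case $\alpha_{j}<\alpha_{0}$ and by a reductio combining two expanded inequalities in the case $\alpha_{j}>\alpha_{0}$. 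Your exchange argument replaces both stages at once: global optimality of the vertex set $S^{\ast}$ yields the two single-element necessary conditions $g(S^{\ast}\cup\{p\})\le g(S^{\ast})$ and $g(S^{\ast}\setminus\{q\})\le g(S^{\ast})$, which after cancelling the positive factors $l_{p}$ and $l_{q}$ give $\alpha_{p}\ge D(2N+l_{p})/N^{2}>2D/N$ and $\alpha_{q}\le D(2N-l_{q})/N^{2}<2D/N$, so the on- and off-indices are strictly separated by the threshold $2D/N$ and every optimal vertex is a prefix in the $\alpha$-ordering. I checked both expansions; the cross-multiplications are legitimate because the relevant denominators are positive whenever all $\alpha_{i}>0$ and $l_{i}>0$, and the degenerate situations you flag ($D=0$, empty or singleton $S^{\ast}$) are exactly the ones the paper also passes over silently. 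What your route buys is substantial: it uses no induction hypothesis at all — as you note, it proves the prefix property in every dimension directly, so the paper's base case $K=2$, this lemma, and the three-variable computation all become redundant — it avoids the aggregation step entirely, and its strict inequalities show in addition that coordinates with tied $\alpha$ values can never straddle the optimal threshold (adding an equal-$\alpha$ term strictly increases the objective), a point left implicit in the paper. What the paper's route offers in exchange is only concreteness, since each of its steps manipulates a fully explicit three-term expression, at the price of markedly messier algebra and a less transparent structure.
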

\begin{proof}
	We can start by writing:
	\begin{align}
	\underset{\delta_{1},\hdots,\delta_{K},\delta_{K+1}}{\max} \;  &\frac{\left(\sum_{i=1}^{K}{\delta_{i}l_{i}(\alpha_{i})} + \delta_{K+1}l_{K+1}(\alpha_{K+1})\right)^{2}}{\sum_{i=1}^{K}{\delta_{i}\alpha_{i}l_{i}(\alpha_{i})} + \delta_{K+1}\alpha_{K+1}l_{K+1}(\alpha_{K+1})} \\ \nonumber
	&= \max\left(\underset{\delta_{1},\hdots,\delta_{K}}{\max} \; \frac{\left(\sum_{i=1}^{K}{\delta_{i}l_{i}(\alpha_{i})}\right)^{2}}{\sum_{i=1}^{K}{\delta_{i}\alpha_{i}l_{i}(\alpha_{i})}},\underset{\delta_{1},\hdots,\delta_{K}}{\max} \; \frac{\left(\sum_{i=1}^{K}{\delta_{i}l_{i}(\alpha_{i})} + l_{K+1}(\alpha_{K+1}) \right)^{2}}{\sum_{i=1}^{K}{\delta_{i}\alpha_{i}l_{i}(\alpha_{i}) + \alpha_{K+1}l_{K+1}(\alpha_{K+1})}} \right)
	\end{align}
	
	If:
	\begin{equation}
	\label{eq:8}
		\underset{\delta_{1},\hdots,\delta_{K}}{\max} \; \frac{\left(\sum_{i=1}^{K}{\delta_{i}l_{i}(\alpha_{i})}\right)^{2}}{\sum_{i=1}^{K}{\delta_{i}\alpha_{i}l_{i}(\alpha_{i})}} \ge \underset{\delta_{1},\hdots,\delta_{K}}{\max} \; \frac{\left(\sum_{i=1}^{K}{\delta_{i}l_{i}(\alpha_{i})} + l_{K+1}(\alpha_{K+1}) \right)^{2}}{\sum_{i=1}^{K}{\delta_{i}\alpha_{i}l_{i}(\alpha_{i}) + \alpha_{K+1}l_{K+1}(\alpha_{K+1})}}
	\end{equation}
	Then the statement is trivially true. Suppose now that equation~\eqref{eq:8} does \textbf{not} hold. We show next that:
	\begin{equation}
	\label{eq:9}
		(\hat{\delta}_{1},\hdots,\hat{\delta}_{K}) = \underset{\delta_{1},\hdots,\delta_{K}}{\argmax} \; \frac{\left(\sum_{i=1}^{K}{\delta_{i}l_{i}(\alpha_{i})} + l_{K+1}(\alpha_{K+1}) \right)^{2}}{\sum_{i=1}^{K}{\delta_{i}\alpha_{i}l_{i}(\alpha_{i}) + \alpha_{K+1}l_{K+1}(\alpha_{K+1})}} = (\underbrace{1,1,\hdots,1}_{K})
	\end{equation}
	which would complete the proof. To show that equation~\eqref{eq:9} is true when equation~\eqref{eq:8} does \textbf{not} hold, we proceed by contradiction in two steps. First we prove that there is at most a single $j \in \left\{1,\hdots,K\right\} \; | \; \hat{\delta}_{j}=0$. To see that, suppose $\exists j \; | \; \hat{\delta}_{j}=0$ and define:
	\begin{equation}
		\widetilde{T}(\delta_{1},\hdots,\delta_{j-1},\delta_{j+1},\hdots,\delta_{K},\delta_{K+1}) =  \frac{\left(\sum_{i=1,i \neq j}^{K}{\delta_{i}l_{i}(\alpha_{i})} + \delta_{K+1}l_{K+1}(\alpha_{K+1}) \right)^{2}}{\sum_{i=1,i \neq j}^{K}{\delta_{i}\alpha_{i}l_{i}(\alpha_{i}) + \delta_{K+1}\alpha_{K+1}l_{K+1}(\alpha_{K+1})}}
	\end{equation} 
	which is nothing but $T(\delta_{1},\delta_{2},\hdots,\delta_{K},\delta_{K+1})$ with the $j$-th term removed. Note that, since $\hat{\delta}_{j}=0$, we have:
	\begin{equation}
		 \underset{\delta_{1},\hdots,\delta_{j-1},\delta_{j+1},\hdots,\delta_{K},\delta_{K+1}}{\max} \; \widetilde{T}(\delta_{1},\hdots,\delta_{j-1},\delta_{j+1},\hdots,\delta_{K},\delta_{K+1}) \ge  \underset{\delta_{1},\hdots,\delta_{K}}{\max} \; \frac{\left(\sum_{i=1}^{K}{\delta_{i}l_{i}(\alpha_{i})} + l_{K+1}(\alpha_{K+1}) \right)^{2}}{\sum_{i=1}^{K}{\delta_{i}\alpha_{i}l_{i}(\alpha_{i}) + \alpha_{K+1}l_{K+1}(\alpha_{K+1})}}
	\end{equation}
	Moreover, since equation~\eqref{eq:8} does not hold, it follows that:
	\begin{equation}
		(\tilde{\delta}_{1},\hdots,\tilde{\delta}_{j-1},\tilde{\delta}_{j+1},\hdots,\tilde{\delta}_{K},\tilde{\delta}_{K+1}) = \underset{\delta_{1},\hdots,\delta_{j-1},\delta_{j+1},\hdots,\delta_{K},\delta_{K+1}}{\argmax} \; \widetilde{T}(\delta_{1},\hdots,\delta_{j-1},\delta_{j+1},\hdots,\delta_{K},\delta_{K+1})
	\end{equation}
	satisfies $\tilde{\delta}_{K+1}=1$ (otherwise equation~\eqref{eq:8} would be true). But, since the monotonicity property is assumed to be true for problems of dimension $K$, it turns out that $\tilde{\delta}_{i}=1$ for $i=1,\hdots,j-1,j+1,\hdots,K$ as well. And, since $\tilde{\delta}_{K+1}=1$, then:
	\begin{equation}
		 \underset{\delta_{1},\hdots,\delta_{j-1},\delta_{j+1},\hdots,\delta_{K},\delta_{K+1}}{\max} \; \widetilde{T}(\delta_{1},\hdots,\delta_{j-1},\delta_{j+1},\hdots,\delta_{K},\delta_{K+1}) =  \underset{\delta_{1},\hdots,\delta_{K}}{\max} \; \frac{\left(\sum_{i=1}^{K}{\delta_{i}l_{i}(\alpha_{i})} + l_{K+1}(\alpha_{K+1}) \right)^{2}}{\sum_{i=1}^{K}{\delta_{i}\alpha_{i}l_{i}(\alpha_{i}) + \alpha_{K+1}l_{K+1}(\alpha_{K+1})}}
	\end{equation}
	which in turn implies $\hat{\delta}_{i}=1$ for $i=1,\hdots,j-1,j+1,\hdots,K$. Thus, $j$ is the only dimension which could satisfy $\hat{\delta}_{j}=0$.

To end the proof, we need to show that, indeed, it's not possible to have $\hat{\delta}_{j}=0$ either. To do so we will show that the statement of monotonicity holds for $K=3$, then we could easily show $\hat{\delta}_{j}=1$. We use a change of variables to make it clearer. 

Indeed, we rewrite $T(\delta_1,\delta_2,...,\delta_j,...,\delta_{K-1},\delta_K)$ the following way :
\[
T(\delta_1,\delta_2,...,\delta_{K-1},\delta_K)=\frac{(f_0+f_j\delta_i+f_K)^2}{\alpha_0f_0+\alpha_jf_j\delta_i+\alpha_Kf_K}
\]

with 

\[f_0=\sum_{l=1...K-1/j}f_l\] 

and 

\[f_0\alpha_0=\sum_{l=1...K-1/j}f_l\alpha_l \Leftrightarrow \alpha_0=\frac{\sum_{l=1...K-1/j}f_l\alpha_l}{\sum_{l=1...K-1/j}f_l}\]

Then, if equation~\eqref{eq:8} does \textbf{not} hold, we would have:

\begin{equation}
	\underset{\delta_{1},\hdots,\delta_{K},\delta_{K+1}}{\max} \; T(\delta_{1},\hdots,\delta_{K},\delta_{K+1}) = \underset{\delta_{0},\delta_{j},\delta_{K+1}}{\max} \; \frac{(\delta_{0}l_{0} + \delta_{j}l_{j} + \delta_{K+1}l_{K+1})^{2}}{\delta_{0}\alpha_{0}l_{0}+\delta_{j}\alpha_{j}l_{j}+\delta_{K+1}\alpha_{K+1}l_{K+1}} 
\end{equation}
where we know, by assumption, that the optimum in the right hand side is achieved when $\delta_{0}=1$ and $\delta_{K+1}=1$. If we knew monotonicity holds for K=3, it would then follow that $\delta_{j}=1$ if $\alpha_{j} \ge \alpha_{0}$. 

\end{proof}

To rephrase it, we want to show that the two following cases: $T(\delta_j=1,\delta_0=1,\delta_K=1)<T(\delta_j=0,\delta_0=1,\delta_K=1)$ with $\alpha_j<\alpha_0$ and $T(\delta_j=1,\delta_0=1,\delta_K=1)<T(\delta_j=0,\delta_0=1,\delta_K=1)$ with $\alpha_j>\alpha_0$ are impossible with the hypothesis that 
$\forall \quad \{\delta_1, \delta_2,..., \delta_{K-1}\} \quad T(\delta_1,\delta_2,...,\delta_{K-1},0)<max_{\delta_1,...,\delta_{K-1}} T(\delta_1,\delta_2,...,\delta_{K-1},1)$



First, we show that when $\alpha_j<\alpha_0$, then $T(\delta_j=1,\delta_0=1,\delta_K=1)>T(\delta_j=0,\delta_0=1,\delta_K=1)$. 
Indeed, after developing the difference we obtain :

\begin{align}\label{eq:result1}
&T(\delta_j=1,\delta_0=1,\delta_K=1)-T(\delta_j=0,\delta_0=1,\delta_K=1)\\
&= \frac{l_j}{(l_0\alpha_0+l_K \alpha_K)(l_j \alpha_j+l_0 \alpha_0+l_K\alpha_K)}(l_0^2(2\alpha_0-\alpha_j)\\
&+l_K^2 (2\alpha_K-\alpha_i)+l_0l_K(2\alpha_0+2\alpha_K-\alpha_j+l_0l_j\alpha_0+l_Kl_j\alpha_K))
&>0
\end{align}

As $\alpha_j<\alpha_0<\alpha_K$, all the terms of the previous sum are positive, which implies that $T(\delta_j=1,\delta_0=1,\delta_K=1)>T(\delta_j=0,\delta_0=1,\delta_K=1)$.

In a second time we want to show that the case $T(\delta_0=1,\delta_j=1,\delta_K=1)<T(\delta_0=1,\delta_j=0,\delta_K=1)$ with  $\alpha_j>\alpha_0$ is not possible either. In this case we use a Reductio ad absurdum: we are going to show that we can not have both $T(\delta_0=1,\delta_j=0,,1)>T(1,1,1)$ and $T(\delta_0=1,\delta_j=0,1)>T(\delta_0=0,\delta_j=1,0)$. Indeed after developing both inequalities, we find 

\begin{align}\label{eq:inequality1}
&T(\delta_0=1,\delta_j=0,1)>T(1,1,1) \Leftrightarrow \alpha_0<\frac{1}{l_0}(\alpha_j\frac{(l_0+l_K)^2}{2(l_0+l_K)+l_j}-\alpha_Kl_K)\\
 &T(\delta_0=1,\delta_j=0,1)>T(\delta_0=1,\delta_j=0,0)\Leftrightarrow \alpha_0>\frac{l_0}{2l_0+l_K}\alpha_K
\end{align}

The first inequality of \ref{eq:inequality1} can be simplified the following way, by using the following inequalities $\alpha_0<\alpha_j<\alpha_K$ and $\forall i \quad l_i>0$.

\begin{align}\label{eq:inequalitysimpl}
\alpha_0 &<\frac{1}{l_0}(\alpha_i\frac{(l_0+l_K)^2}{2(l_0+l_K)+l_j}-\alpha_Kl_K)\\
&<\frac{1}{l_0}(\alpha_K\frac{(l_0+l_K)^2}{2(l_0+l_K)+l_j}-\alpha_Kl_K)=\alpha_K(\frac{1}{l_0}\frac{(l_0+l_K)^2}{2(l_0+l_K)+l_j}-l_K)
\end{align}

Using \ref{eq:inequality1} and \ref{eq:inequalitysimpl} we have the following result :

\begin{align}\label{eq:result2}
&\frac{l_0}{2l_0+l_K}\alpha_K<\alpha_0<\alpha_K\frac{1}{l_0}(\frac{(l_0+l_K)^2}{2(l_0+l_K)+l_j}-l_K)\\
&\Rightarrow \frac{l_0}{2l_0+l_K}<\frac{1}{l_0}(\frac{(l_0+l_K)^2}{2(l_0+l_K)+l_i}-l_K)\\
&\Rightarrow 0<-(l_0+l_K)^2(l_K+l_j)
\end{align}

The last line of the previous equation set shows clearly the contradiction. 

Those two results \ref{eq:result1} and \ref{eq:result2} end the proof.

\section*{Real Data Application: Arabidopsis}

We evaluate the ability of FastCMH to detect significant contiguous patterns while correcting for confounders from an association mapping study for \textit{Arabidopsis thaliana}, a small flowering plant. Significant contiguous patterns are groups of SNPs (Single Nucleotide Polymorphisms, i.e. mutations occuring on the DNA sequence) significantly correlated with traits or phenotypes (as the size of the plant, its resistance to bacterias, the shape of the leaves...). For Arabidopsis data it is well known that geographic location is a major covariate because it is strongly related to population structure. 

\subsection*{Data description}
The data are a widely used \textit{Arabidospsis thaliana} GWAS dataset by Atwell et al. (2010) from online resource \textit{easy}GWAS (Grimm et al., 2012). This dataset is composed of 107 tables, related to a continuous or dichotomous phenotype, for at most 194 samples of inbred lines and a total of 214,051 SNPs. Out of those 107 phenotypes, there are 21 dichotomous phenotypes. A previous study found more than 2 significant patterns in 7 out of the 21 tables. For the experiments we took among them the three phenotype names YEL, LY and LES for which the population structure, measured by the genomic inflation factor $\lambda_{GI}>2$, is the biggest. All of them contain 95 sample lines and 214,051 SNPs. We clustered the sample lines according to geographical locations, a first cluster containing the samples harvested outside Europe ($N_{o}=25$), and three other European clustered. For those, we visually clustered the remaining samples using the biggest components of a Principal Component Analysis, resulting into a Western ($N_{w}=23$), a Northern ($N_{n}=20$) and a Central European ($N_{c}=27$) cluster. 

\subsection*{Experiments and results}
We ran a Chi2-test and FastCMH on each of the dataset. It appears that for those samples with a high population structure FastCMH finds less significant patterns than the Chi2-test as it is shown in Table 1 in the main text. 
To confirm this result we ran a second experiment: for each phenotype we randomized the geographical labels among the samples, 150 times, and calculated respectively the number of significant patterns. The number of patterns found by FastCMH on the original tables is each time situated in the lower tale of the distribution of the number of patterns in case of a random covariate. It shows that our method is correcting for confounders by decreasing the number of false positives.

\section*{Generating confounded significant intervals}

We now describe a procedure for generating a \emph{confounded significant interval}, as mentioned in the Experiments section in the main text. Consider the covariance matrix

\[\Sigma =  \left( \begin{array}{ccc}
1 & 0 & \rho_{\text{sig}} \\
0 & 1 & \rho_{\text{con}} \\
\rho_{\text{sig}} & \rho_{\text{con}} & 1 \end{array} \right)\] 

Consider sampling a single $z$ from the multivariate Bernoulli distribution with mean $(0.5,0.5,0.5)$ and covariance matrix $\Sigma$. This can be done in R using the \emph{bindata} package, which results in a three-vector $(x, c, y)$.

This $z$ will be three-dimensional; the first component $z_1$ is an indicator function, which indicates $(1)$ if that sequence $x_i$ will contain a significant subsequence, or not $(0)$. The second component $z_2$ is the categorical label, and the third component $z_3$ will be the class label $y_i$. 

Then, for another variable $z_4 \sim \text{B}(p_{\epsilon})$, for $p_{\epsilon}=0.1$, take $z_5 = \text{XOR} (z_2, z_4)$ to indicate if that subsequence will have a confounded significant subsequence.

\section*{Additional Experiments} 

Below are additional experiments showing the comparing the power and speed of the various algorithms described in the Experiments section in the main text.

\begin{figure}
\centering
\includegraphics[width=10cm]{./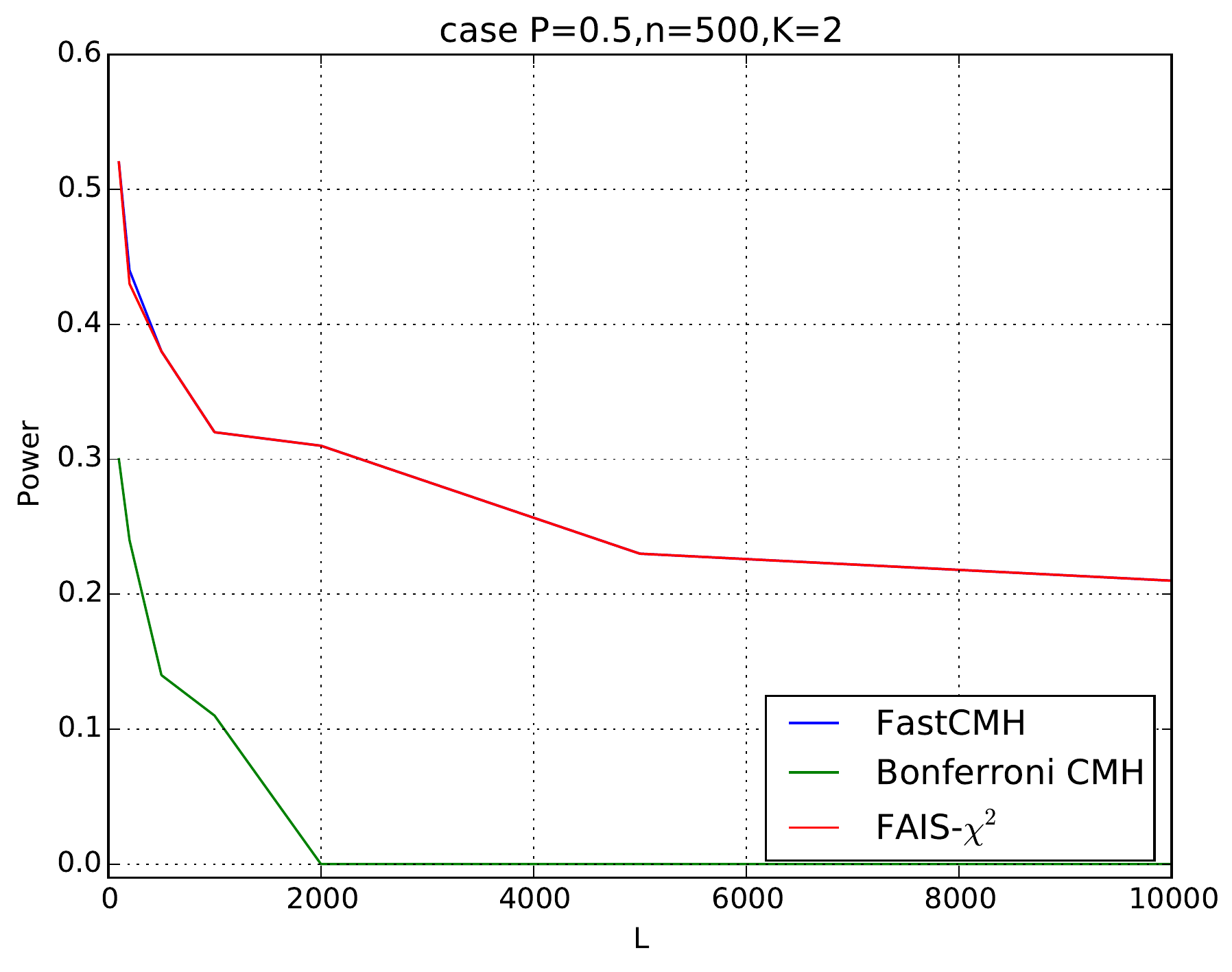}
\caption{A figure comparing the power of \texttt{FastCMH}, \texttt{FAIS-$\chi^2$} and \texttt{Bonferroni-CMH} as the length of the sequences $L$ varies.}
\label{fig:powerL}
\end{figure}

\begin{figure}
\centering
\includegraphics[width=10cm]{./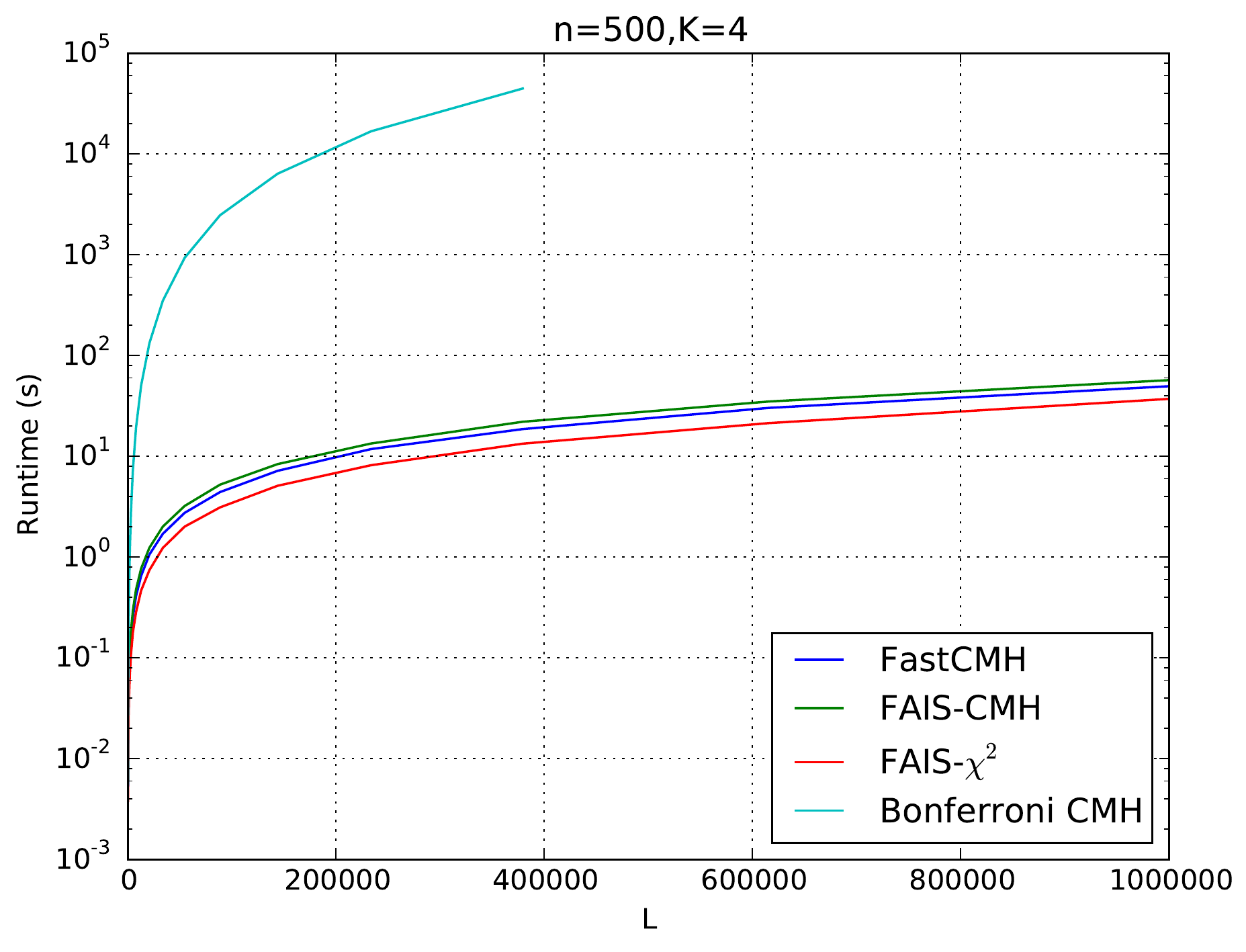}
\caption{A figure comparing the speed of \texttt{FastCMH}, \texttt{FAIS-CMH}, \texttt{FAIS-$\chi^2$} and \texttt{Bonferroni-CMH} as the length of the sequences $L$ varies, for $K=4$}
\label{fig:speedL}
\end{figure}

\begin{figure}
\centering
\includegraphics[width=10cm]{./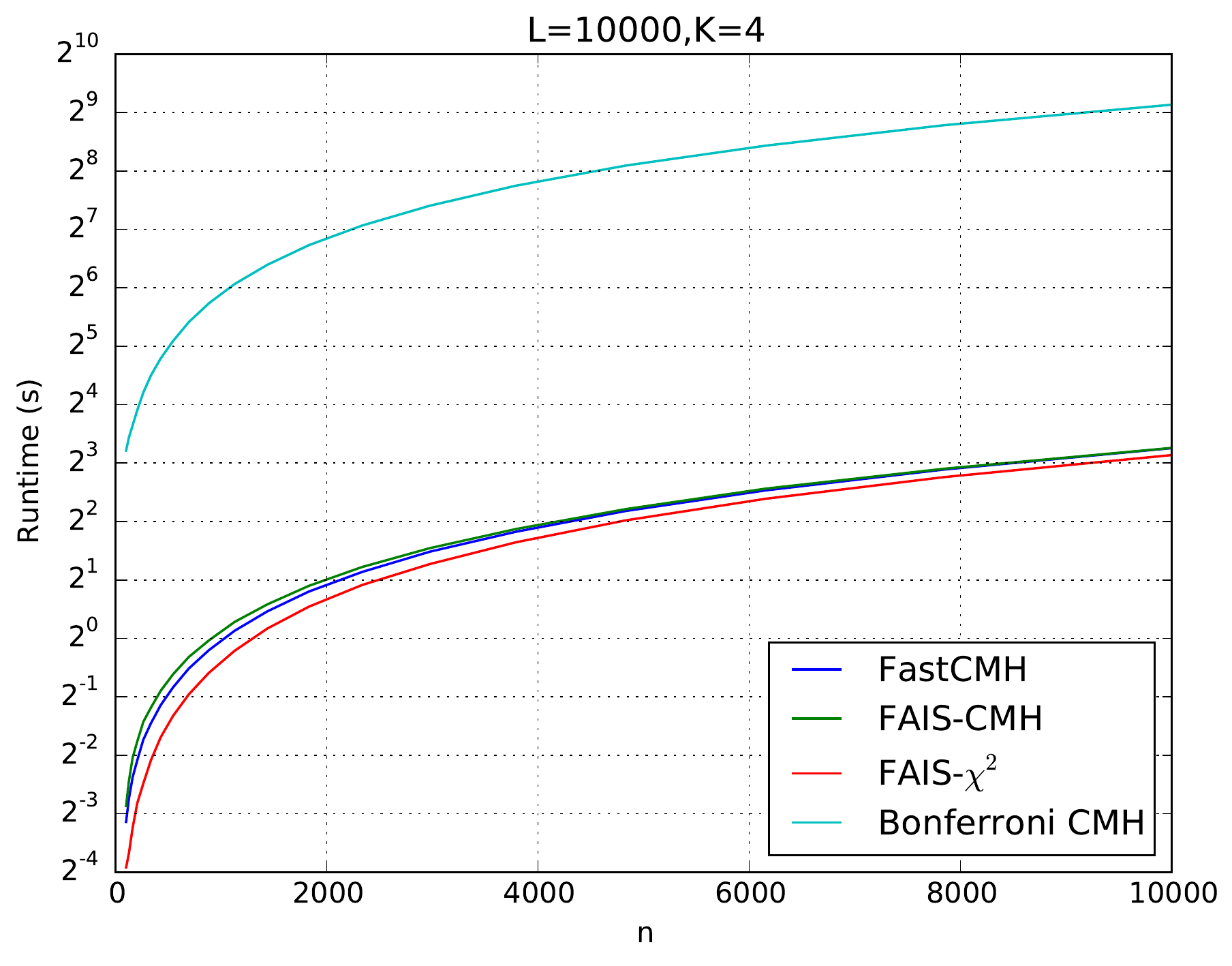}
\caption{A figure comparing the speed of \texttt{FastCMH}, \texttt{FAIS-CMH}, \texttt{FAIS-$\chi^2$} and \texttt{Bonferroni-CMH} as the number of samples $n$ varies, for $K=4$}
\label{fig:speedN}
\end{figure}

\begin{figure}
\centering
\includegraphics[width=10cm]{./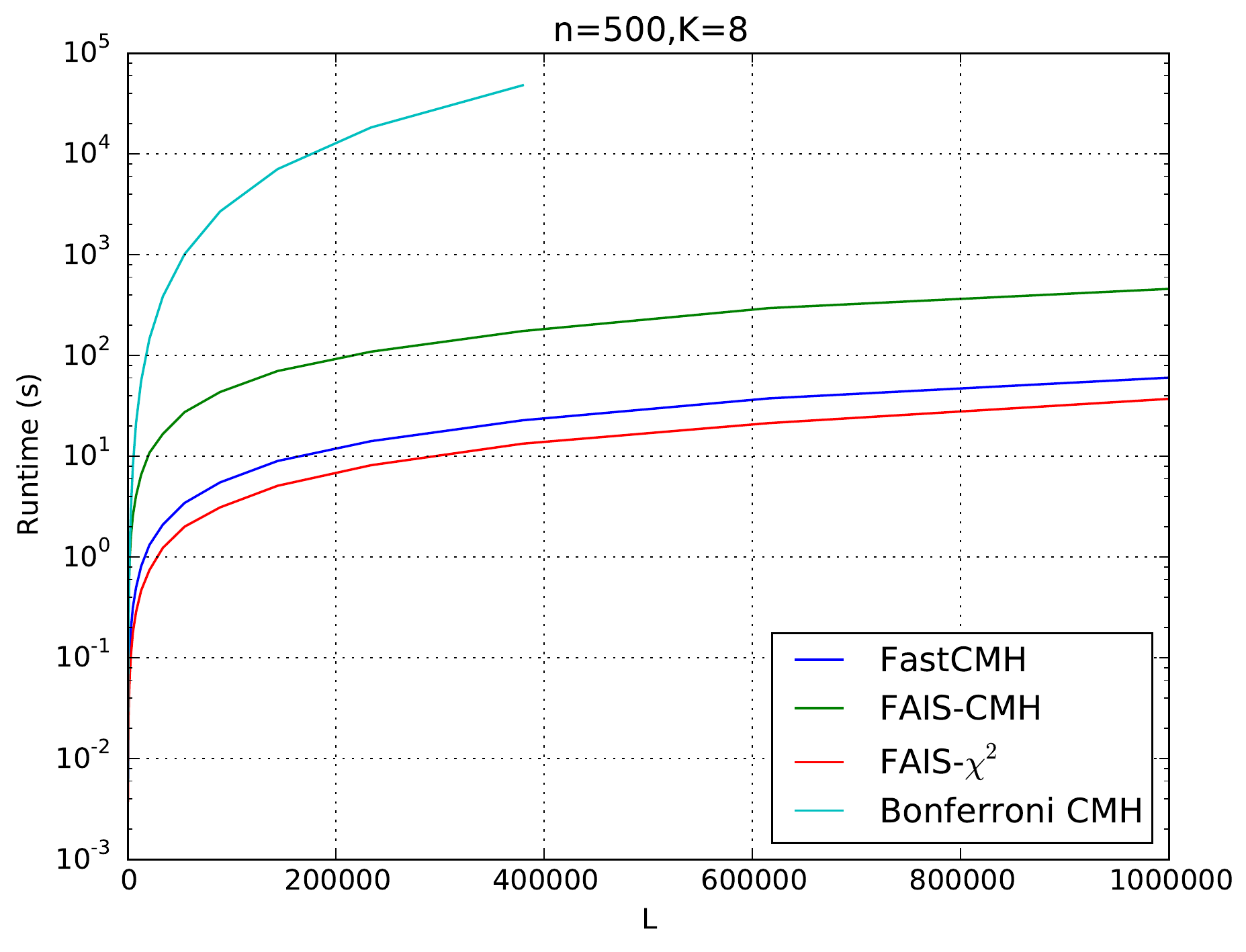}
\caption{A figure comparing the speed of \texttt{FastCMH}, \texttt{FAIS-CMH}, \texttt{FAIS-$\chi^2$} and \texttt{Bonferroni-CMH} as the length of the sequences $L$ varies, for $K=8$. Notice the increased difference between \texttt{FAIS-CMH} and \texttt{FastCMH}, when compared to Figure~\ref{fig:speedL}.}
\label{fig:speedL2}
\end{figure}

\begin{figure}
\centering
\includegraphics[width=10cm]{./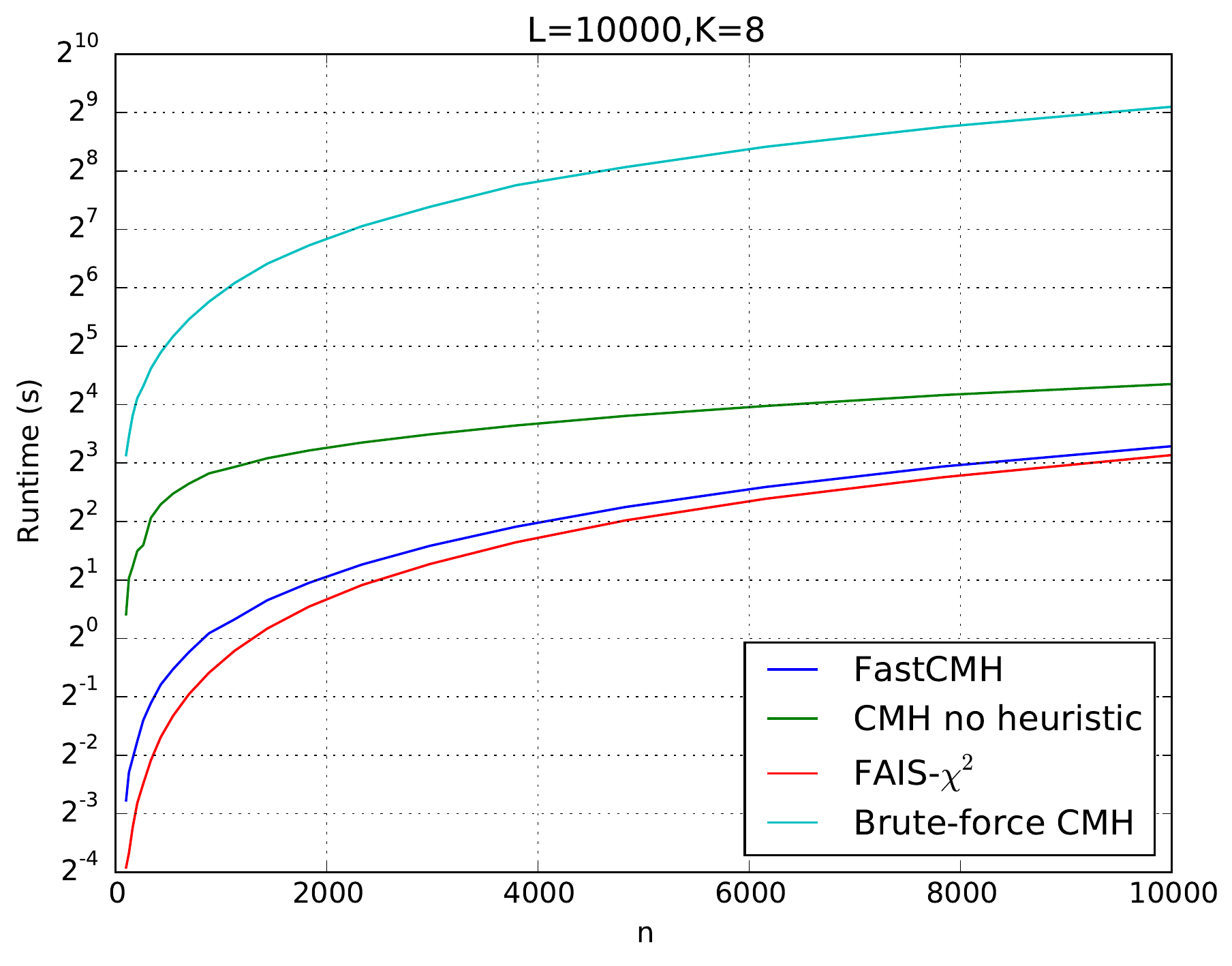}
\caption{A figure comparing the speed of \texttt{FastCMH}, \texttt{FAIS-CMH}, \texttt{FAIS-$\chi^2$} and \texttt{Bonferroni-CMH} as the number of samples $n$ varies, for $K=8$. Notice the increased difference between \texttt{FAIS-CMH} and \texttt{FastCMH}, when compared to Figure~\ref{fig:speedN}.}
\label{fig:speedN2}
\end{figure}


\newpage
\bibliographystyle{plain}
\bibliography{bibliography}

\end{document}